%% file: neurips_2025.tex
\documentclass{article}

\usepackage[preprint]{neurips_2025}

\input{variables.tex}

\usepackage[utf8]{inputenc} 
\usepackage[T1]{fontenc}    
\usepackage{hyperref}       
\usepackage{url}            
\usepackage{booktabs}       
\usepackage{amsfonts}       
\usepackage{nicefrac}       
\usepackage{microtype}      
\usepackage{xcolor}         
\usepackage[normalem]{ulem} 
\usepackage{float}
\newif\ifshowchanges
\showchangestrue  

\ifshowchanges
  
  \newcommand{\removed}[1]{{\color{red}\sout{#1}}}
  
\else
  
  \newcommand{\removed}[1]{}
  
\fi

\usepackage{tcolorbox}
\tcbuselibrary{skins,breakable}
\usepackage{graphicx}
\usepackage{tikz}
\usepackage{amsmath, amssymb, amsfonts}
\usepackage{amsthm}
\usepackage{algorithm}
\usepackage{algpseudocode}
\usepackage{float}
\usepackage[algo2e,linesnumbered]{algorithm2e}
\usepackage{threeparttable}

\theoremstyle{plain}
\newtheorem{theorem}{Theorem}[section]

\usepackage[font=small,labelfont=bf]{caption}
\usepackage[font=small]{subcaption}
\usepackage{multirow}
\usepackage{siunitx}
\usepackage{soul} 

\newif\ifinternalcomments
\internalcommentsfalse 

\ifinternalcomments
  \definecolor{commentblue}{rgb}{0,0,0.7}
  \newcommand{\rebecca}[1]{\sethlcolor{commentblue}\hl{[Rebecca: #1]}}
  \definecolor{commentgreen}{rgb}{0,0.5,0}
  \newcommand{\asmit}[1]{\sethlcolor{commentgreen}\hl{[Asmit: #1]}}
  \definecolor{commentorange}{rgb}{1,0.5,0}
  \newcommand{\shikhar}[1]{\sethlcolor{commentorange}\hl{[Shikhar: #1]}}
  \definecolor{commentcyan}{rgb}{0,0.7,0.7}
  \newcommand{\amey}[1]{\sethlcolor{commentcyan}\hl{[Amey: #1]}}
  \definecolor{mydarkblue2}{HTML}{011c73}
  \newcommand{\satwik}[1]{\textcolor{mydarkblue2}{\textit{Satwik:} #1}}
  \newcommand{\khalid}[1]{\sethlcolor{commentorange}\hl{[Khalid: #1]}}
\else
  \newcommand{\rebecca}[1]{}
  \newcommand{\asmit}[1]{}
  \newcommand{\shikhar}[1]{}
  \newcommand{\amey}[1]{}
  \newcommand{\satwik}[1]{}
  \newcommand{\khalid}[1]{}
\fi

\title{{Linear Predictability of Attention Heads in Large Language Models}}

\author{\stepcounter{footnote}
\begin{tabular}{c}
  Khalid Shaikh$^{1}$\thanks{Corresponding author: \href{mailto:kshaikh6@gatech.edu}{kshaikh6@gatech.edu}} \quad
  Asmit Kumar Singh$^{1}$\\[0.4em]
  Rebecca Christopher Dsouza$^{2}$ \quad
  Shikhar Shiromani$^{1}$\\[0.8em]
  {\normalfont
    $^{1}$Georgia Institute of Technology \quad
    $^{2}$Santa Clara University}
\end{tabular}}

\begin{document}

\maketitle

\input{Abstract}
\input{sections/Intro}

\input{sections/linearity}

\input{sections/kv_cache}

\input{sections/relatedwork}
\input{sections/conclusion}

\bibliographystyle{plainnat}
\bibliography{sections/all}

\newpage
\appendix

\input{appendix/add_kvcache}

\input{appendix/add_lin_pred}
\input{appendix/lower_bound}

\input{appendix/resources}

\end{document}

%% file: variables.tex
\usepackage{commath}
\usepackage{amssymb}
\usepackage{amsmath,amsfonts,bm}

\DeclareMathOperator*{\E}{\mathbb{E}}

\DeclareMathOperator{\tr}{\mathsf{tr}}

\def\cN{{\mathcal{N}}}

\newcommand{\br}{\mathbb{R}}
\newcommand{\R}{\mathbb{R}}

\newcommand{\myx}{$\times$}

\def\mA{{\bm{A}}}
\def\mB{{\bm{B}}}
\def\mC{{\bm{C}}}

\def\mI{{\bm{I}}}

\def\mK{{\bm{K}}}

\def\mP{{\bm{P}}}
\def\mQ{{\bm{Q}}}
\def\mR{{\bm{R}}}

\def\mV{{\bm{V}}}
\def\mW{{\bm{W}}}
\def\mX{{\bm{X}}}

\def\vzero{{\bm{0}}}

\def\vb{{\bm{b}}}

\def\vx{{\bm{x}}}

%% file: Abstract.tex
\begin{abstract}
    Large language model (LLM) inference is increasingly bottlenecked by the Key–Value (KV) cache, yet the fine-grained structure of attention-head activations remains poorly understood. We show that pretrained Transformers exhibit a pervasive inter-head linear structure: for a given token, the Query, Key, and Value (QKV) vectors of an attention head can often be reconstructed as a linear combination of a small number of peer heads—typically within the same layer. Across Llama‑3.1 8B, Falcon3‑10B, OLMo‑2‑7B, and Qwen3‑32B, just 2–5 reference heads recover many target heads with high fidelity (e.g., mean $R^2\approx 0.76$ for Keys on C4 with five references, and frequently $R^2 > 0.85$ on GSM8K). This predictability is learned rather than architectural: it is largely absent at random initialization, rises rapidly during pretraining as we track through OLMo‑2 checkpoints, and is supported by a theoretical lower bound showing high mean-squared error for linear prediction at initialization. We further connect emergence to increasing intra-layer alignment of Key projection subspaces. Finally, we exploit this redundancy for efficiency by caching only reference-head KV states and reconstructing the remaining heads on the fly via lightweight linear maps, achieving 2× KV-cache reduction with model-dependent accuracy trade-offs (4.5–5.5 pp average drop on Falcon3‑10B/Qwen3‑32B across five benchmarks, and larger drops on Llama‑3.1 8B), and we find that reconstructing Keys is substantially less harmful than reconstructing Values.
    \footnote{Code: \url{https://github.com/shaikhalid/Interpretability-and-Linear-Predictability-of-Attention-Heads}}
\end{abstract}

%% file: sections/Intro.tex
\section{Introduction}
\label{sec:introduction}

A central question in understanding Large Language Models (LLMs) is what structural properties emerge in their attention mechanisms during pretraining. While prior interpretability work has studied \emph{which} tokens heads attend to and \emph{which} heads can be pruned, the fine-grained relationships among the internal Key, Value, and Query \emph{activation vectors} of different heads remain poorly understood. In this paper, we show that these activations are not independent: in pretrained LLMs, individual heads are strongly \emph{linearly predictable} from their peers, a property absent at initialization and arising as a consequence of pretraining. Beyond its theoretical interest, this structure has a direct practical implication: it enables compression of the KV cache---the primary memory bottleneck during autoregressive generation \citep{kwon2023efficientmemorymanagementlarge}---without any architectural changes.

Previous interpretability work has shed light on attention, revealing functional specializations and significant redundancy between attention heads at a semantic level \citep{clark2019bertattention, michel2019sixteen}. These insights have led to effective techniques for pruning or sharing attention components to alleviate KV cache pressure \citep{agarwal2024chai, ainslie2023gqa, shim2021layerwisepruningtransformerattention}. However, the finer-grained relationships among the \emph{internal K, V, and Q vector activations} of different heads have remained largely unexplored; concurrent work (e.g., \textsc{AQUA-KV}; \citealp{shutova2025cache}) has begun to probe this space in the context of quantization, but the broader structural phenomenon and its implications for head-elimination compression remain uncharacterized. In this paper, we ask: \emph{do pretrained LLMs develop systematic linear dependencies among the activation vectors of different attention heads, and if so, when and how does this structure emerge during training?}

\input{plots/banner_fig}

We give an overview of our contributions below.

\textbf{Identification and Characterization of Linear Predictability (\S\ref{sec:linear-relations}).} We show that individual attention head K, V, and Q activations in pretrained LLMs are \emph{linearly predictable} from a small set of peer heads. Across LLaMA-3.1 8B, Falcon3-10B, OLMo-2-7B, and Qwen3-32B, Key states reach a mean $R^2$ of 0.76 on C4 with $N\!=\!5$ reference heads, and $R^2 > 0.85$ on GSM8K (Fig.~\ref{fig:linear_keystate_suite}). Over 75\% of pairwise head relationships exceed $R^2\!=\!0.5$, revealing a pervasive level of learned intra-layer structure not captured by prior attention-map redundancy analyses.


\textbf{Emergence and Mechanistic Analysis (\S\ref{sec:analysis}).} Linear predictability is absent at initialization (avg $R^2 < 0.05$; fewer than 5\% of head pairs exceed $R^2\!=\!0.10$) and develops during pretraining. Using OLMo-2 intermediate checkpoints (1B to $\sim$4T tokens), we trace a steady rise in intra-layer predictability while inter-layer links peak early and then decline. We connect this trajectory to growing \emph{subspace overlap}: the overlap dimension of Key projection matrices within each layer increases monotonically during training, with Keys expanding fastest, Queries following, and Values remaining nearly orthogonal. A theoretical lower bound (Theorem~\ref{thm:main-lowerbound}) confirms that random initialization produces high MSE with overwhelming probability, establishing that this structure is learned during pretraining, not architectural.

\textbf{KV Cache Compression via Head Elimination (\S\ref{sec:kvcache}).} As a practical application, we compress the KV cache by storing only reference heads and reconstructing target heads on-the-fly with learned linear maps, achieving $2\times$ memory reduction. Accuracy trade-offs are model-dependent: Qwen3-32B and Falcon3-10B incur average drops of 5.5\,pp and 4.5\,pp respectively across five benchmarks, while LLaMA-3.1 8B shows a larger 9.9\,pp drop (Table~\ref{tab:kv-pred-results-main}). Predicting \emph{Keys} is substantially safer than predicting \emph{Values} (\S\ref{subsec:key-vs-value}), consistent with the asymmetric subspace overlap growth in \S\ref{sec:analysis}.


Relation to concurrent work. \cite{shutova2025cache} improve KV-cache quantization by learning compact layer-wise linear predictors—predicting each layer’s keys from the previous layer’s reconstructed keys, and values from the previous layer’s reconstructed values together with current-layer keys—and then storing only quantized residuals. In contrast, we characterize head-level predictability (predominantly within-layer) and exploit it to eliminate KV storage for a subset of heads entirely: we cache a small set of full-precision reference heads and reconstruct the remaining heads on-the-fly using the linear maps identified in §2 and applied in §4.

%% file: plots/banner_fig.tex
\begin{figure}[t]
  \centering
  \begin{minipage}[t]{0.29\linewidth}
    \centering
    \includegraphics[width=\linewidth]{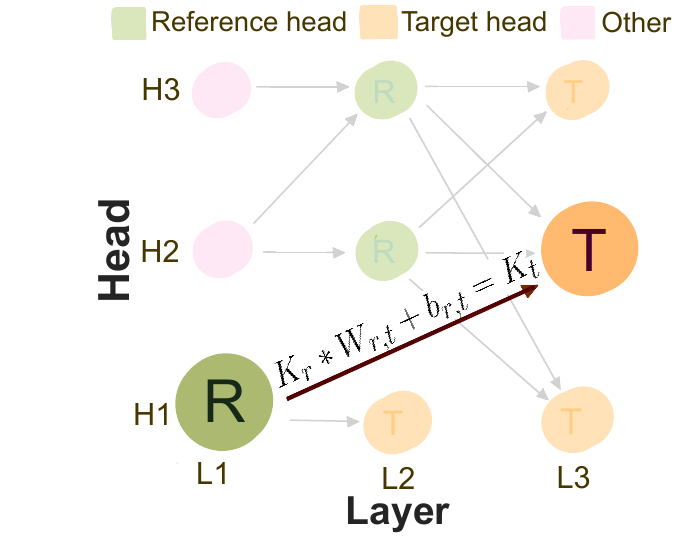}
  \end{minipage}
  \hfill
  \begin{minipage}[t]{0.33\linewidth}
    \centering
    \includegraphics[width=\linewidth]{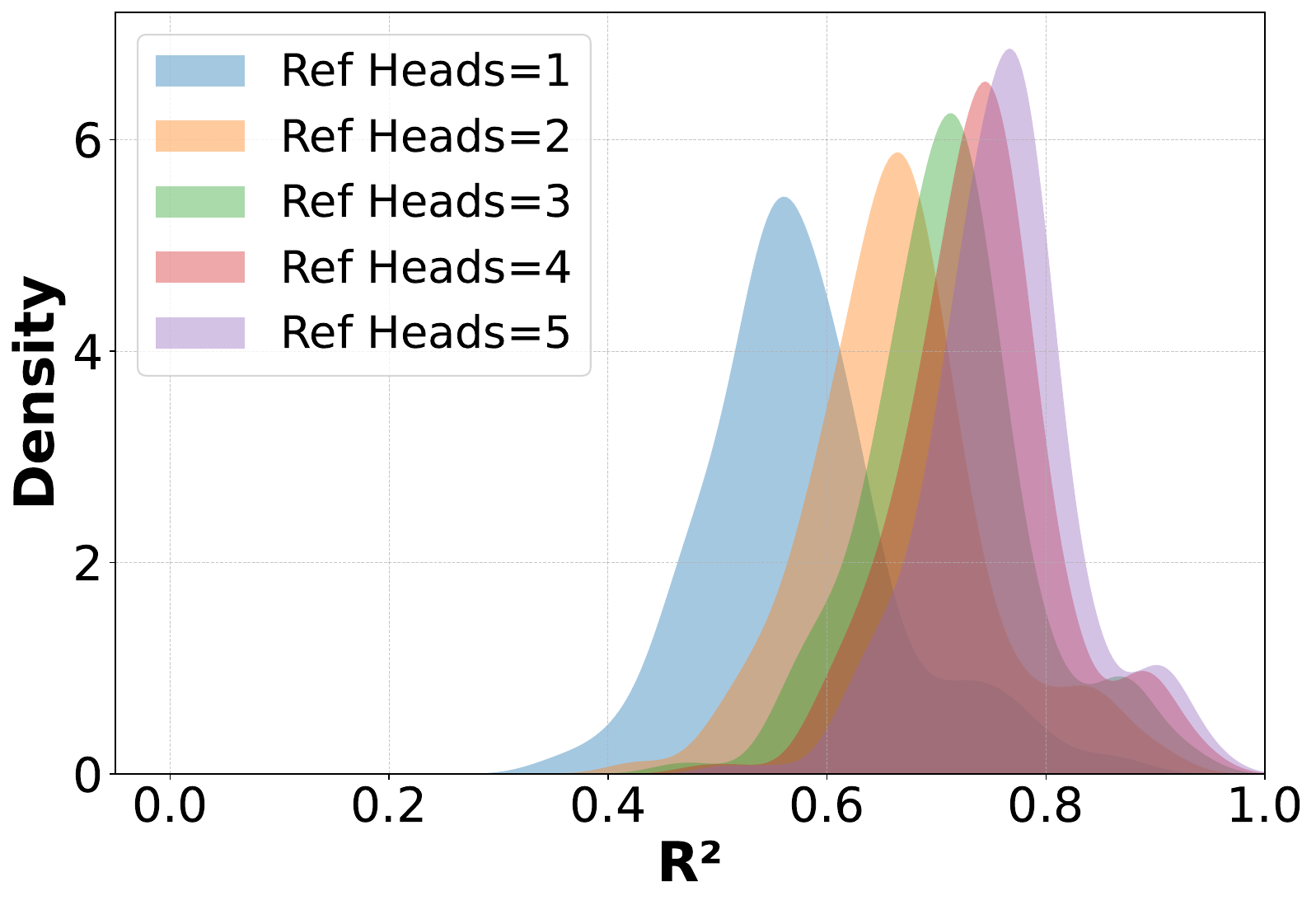}
  \end{minipage}
  \hfill
  \begin{minipage}[t]{0.33\linewidth}
    \centering
    \includegraphics[width=\linewidth]{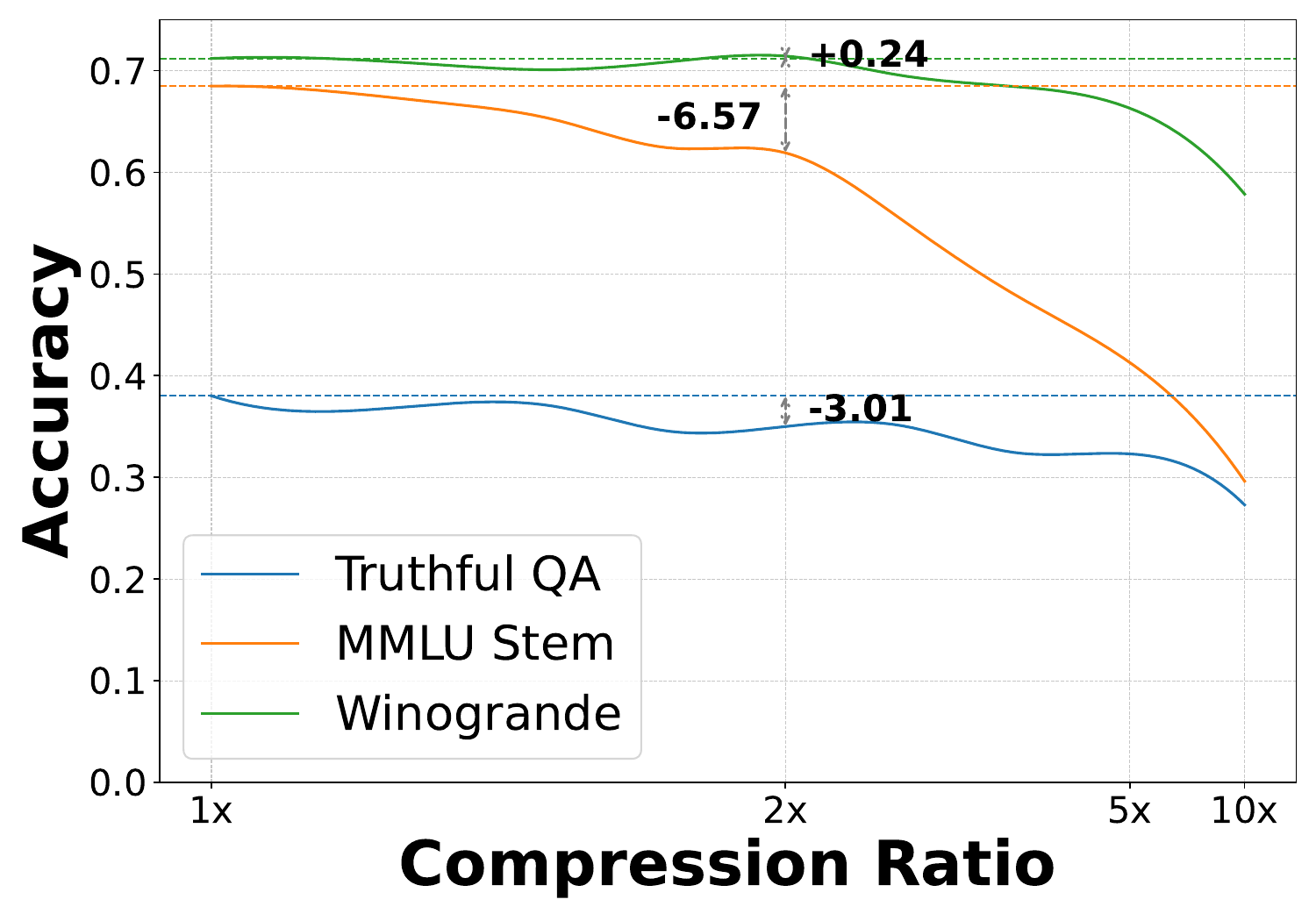}
  \end{minipage}

\caption{\textbf{Linear predictability among attention heads enables significant KV-cache compression.}
(Left) Schematic of approximating a target head's (orange) Key activations using a learned linear projection from reference heads (green).
(Middle) In a pretrained Llama-3.1 8B model, most heads' Key activations are highly predictable (high $R^2$) from a few peers on the C4 dataset \citep{raffel2023exploringlimitstransferlearning} , indicating a shared low-rank subspace.
(Right) This predictability allows for substantial KV-cache compressionwith minimal performance degradation on benchmarks like TruthfulQA, MMLU-STEM, and Winogrande. The dashed line marks 2\myx compression. \vspace{-1em}}

  \label{fig:linear_keystate_suite}
\end{figure}

%% file: sections/linearity.tex
\section{Linear Predictability of Attention Heads}

\label{sec:linear-relations}

A core finding of this work is that pretrained Transformers develop striking linear dependencies among their attention head activations. In this section, we establish and quantify this phenomenon: the Key, Value, and Query activations of individual heads are linearly predictable from a small set of peer heads across diverse model families and tasks. We demonstrate that the activations of a target head can often be accurately reconstructed using a linear combination of activations from a small number of peer heads, a phenomenon that underpins the practical applications discussed later.

\subsection{Background: Multi-Head Attention}
\label{subsec:background_mha}

Our investigation centers on the attention blocks within Transformer-based LLMs. Each layer in a standard Transformer architecture contains a multi-head self-attention (MHA) mechanism followed by a feed-forward network (MLP). The MHA mechanism allows the model to weigh the importance of different tokens when computing the representation for each token.

Specifically, for an input sequence representation $\mX \in \R^{N \times m}$ (where $N$ is sequence length and $m$ is embedding dimension), each of the $H$ attention heads in a layer first projects $\mX$ into Query ($\mQ_h$), Key ($\mK_h$), and Value ($\mV_h$) representations using learned weight matrices $\mW_{Q,h}, \mW_{K,h}, \mW_{V,h} \in \R^{m \times d_h}$ respectively, where $d_h$ is the head dimension. For example, the Key vectors for head $h$ are computed as $\mK_h = \mX \mW_{K,h}$. It is crucial to note that LLMs contain multiple such heads within each layer, and multiple such layers stacked sequentially, forming a deep hierarchy of interacting representations. Our work focuses on the relationships between these individual $\mK_h, \mV_h,$ (and $\mQ_h$) activation vectors across different heads $h$ and different layers.

\input{plots/sec2_linearity_bars_fig} 

\subsection{Empirical Evidence of Linear Predictability}
\label{subsec:empirical_evidence}

To investigate the extent and nature of linear predictability among attention head activations, we conducted experiments using several pretrained LLMs: Llama‑3.1 8B, Falcon3-10B, and OLMo‑2‑7B. For our primary analysis, we collected Key, Value, and Query activations for every head in every layer by processing 150 randomly sampled sequences (average 200 tokens each) from the C4 dataset. This provided approximately 30,000 token activation samples per model for fitting our regression models. Additional datasets like GSM8k were used for further validation, as discussed below.

\textbf{Pairwise Head Predictability.}
Our initial probe involved assessing direct linear relationships between pairs of heads. For every ordered pair of heads $(H_t, H_r)$ within a model (target $H_t$, reference $H_r$, with the constraint $\text{layer}(H_t) \ge \text{layer}(H_r)$ to avoid using future information), we fit an Ordinary Least Squares (OLS) model to predict the activations of $H_t$ using those of $H_r$. The coefficient of determination ($R^2$) from these regressions quantifies the strength of this pairwise linear dependency. As visualized in \autoref{fig:linear_keystate_suite} (middle panel, showing results for Key states in Llama-3.1 8B on C4), a considerable number of head pairs exhibit non-trivial $R^2$ values, suggesting that many individual heads can, to some extent, be linearly predicted by at least one other single head. For instance, 75\% of pairs have $R^2$ > 0.5, and the median $R^2$ is 0.57 for Key states.

\textbf{Enhanced Predictability with Multiple Peer Heads.}
While pairwise analysis revealed individual predictive links, we further explored the potential of using multiple peer heads to enhance the predictability of a target head's activations. For each target head $H_t$, we selected the top $N$ reference heads that yielded the highest $R^2$ scores in the pairwise analysis (these could be from the same or preceding layers). We then assessed their combined predictive strength by fitting a multivariate OLS model using their concatenated state vectors.

The results, depicted in \autoref{fig:linear_keystate_suite} (middle panel, for Llama-3.1 8B Key states on C4), reveal a remarkably strong linear predictability with a small number of peer heads. While adding more reference heads generally increases the average $R^2$, the most significant gains are often achieved with just a few (e.g., 2-3) well-chosen peers. For instance, using $N=3$ reference heads, a substantial fraction of target heads can already be predicted with high fidelity (e.g., a mean $R^2$ of 0.71 is achieved for Key states). Increasing to $N=5$ reference heads pushes this predictability even further, with over 75\% of heads becoming predictable with an $R^2$ > 0.72, and the overall mean $R^2$ reaching a high of 0.76 for Key states.

\textit{\textbf{Takeaway:} Individual head activations are systematically correlated with, and can be accurately approximated by simple linear transformations of a small set of peer heads.}

\subsection{Discussion: Implications of Linear Predictability}
\label{subsec:discussion_linearity}
Two questions naturally follow from the above observations. \textit{(1) Why does the phenomenon matter?} Beyond revealing structural redundancy among attention heads, it shows we can drop many key-value vectors at decoding time and reconstruct them with linear projections, cutting memory usage (see Section \ref{sec:kvcache}). \textit{(2) Why is this linearity surprising?} Predicting heads across layers is unexpected because the vectors pass through nonlinear transformations. Even within a layer, one head can linearly predict another only if their projection subspaces overlap. Concretely, let $\mA,\mB\in\R^{m\times k}$ be projection matrices. For every input $\vx\in\R^{m}$ we seek $\mC\in\R^{k\times k}$ satisfying $\vx^{\top}\mA\mC=\vx^{\top}\mB$, which demands $\mA\mC=\mB$. If the column spaces of $\mA$ and $\mB$ are disjoint, exact prediction is infeasible to achieve; low error can still be attained when the subspaces align or have small principal angles. 
As the next section demonstrates, this linear structure is absent at random initialization and emerges progressively during pretraining---a finding that transforms what could be dismissed as an engineering trick into a fundamental structural property of learned Transformers.

\input{plots/subspace_overlap_and__inter-intra_linearity}
\section{Insights on Linear Predictability of Heads}
\label{sec:analysis}

The observation of strong linear relationships among attention head activations (Section~\ref{sec:linear-relations}) prompts further investigation into their origin and development. This section examines how and when this structure arises, addressing two key questions: (1) Is linear predictability an intrinsic property of Transformers, or is it learned? (2) How does this predictability evolve throughout the pretraining process? Our findings indicate that linear predictability is negligible at initialization and is indeed a learned byproduct of pretraining. We also provide a theoretical argument underscoring why such predictability is unlikely in randomly initialized networks. To our knowledge, no prior work---including concurrent methods that exploit inter-head or inter-layer linearity for cache compression \citep{shutova2025cache}---has studied when or why these linear dependencies arise; the analysis in this section addresses this gap.

\textbf{Evolution of Linearity: An Emergent Property of Pretraining.}
We first contrast the linear predictability observed in fully pretrained language models with their randomly initialized (untrained) counterparts. By analyzing Llama-3.1 8B on the C4 dataset, we find that prior to any pretraining, the average $R^2$ for predicting one head's Key activations from another single head is exceptionally low (with average $R^2 < 0.05$). This starkly contrasts with the significantly higher $R^2$ values we observe in the fully pretrained version of the same model. This comparison strongly suggests that the observed linear structure and inter-head redundancy are not inherent to the architecture but rather emerge and are refined during the pretraining phase.

As \autoref{fig:subspace_overlap_r2_density} shows, pre-training transforms the
head-to-head relationships in LLaMA–3.1 8B from almost independent
to strongly linearly predictable.  
In the randomly initialized network (orange), fewer than \(5\%\) of head pairs
exceed \(R^{2}=0.10\); nearly the entire mass is concentrated at the origin.
After optimization (blue), the distribution shifts dramatically rightward:
the median climbs to \(R^{2}\!\approx\!0.50\), roughly half of all pairs now
surpass \(0.50\), and the top decile approaches \(0.80\).  
This broad peak around \(0.55\) confirms that the training process induces a shared low-rank sub-space in which one head’s
activations can be reconstructed from a small set of others.

Furthermore, analyzing the nature of these connections, we observe distinct trends for intra-layer versus inter-layer predictability as training progresses. ~\autoref{fig:intra-inter-layer-combined} shows the fraction of the strong connections that are intra-layer versus inter-layer at each OLMo-2 checkpoint. Initially, inter-layer is relatively higher, but as pretraining advances, intra-layer predictability steadily strengthens and becomes dominant, while strong inter-layer predictive links become less prevalent.


\begin{center}
  \begin{minipage}[t]{0.49\textwidth}
    \centering
    \includegraphics[width=0.6\linewidth]{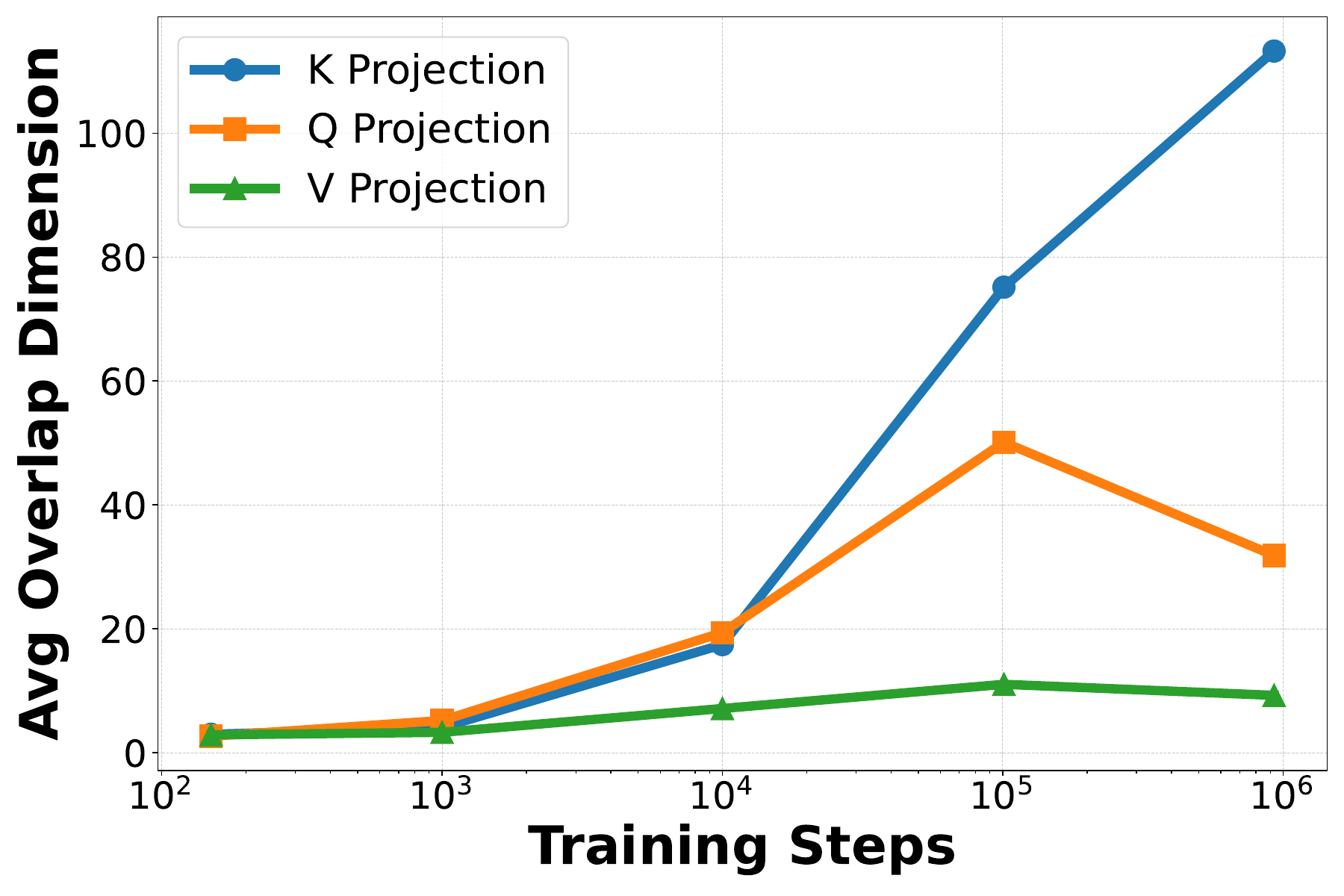}
  \end{minipage}\hfill
  \begin{minipage}[t]{0.49\textwidth}
    \centering
    \includegraphics[width=0.6\linewidth]{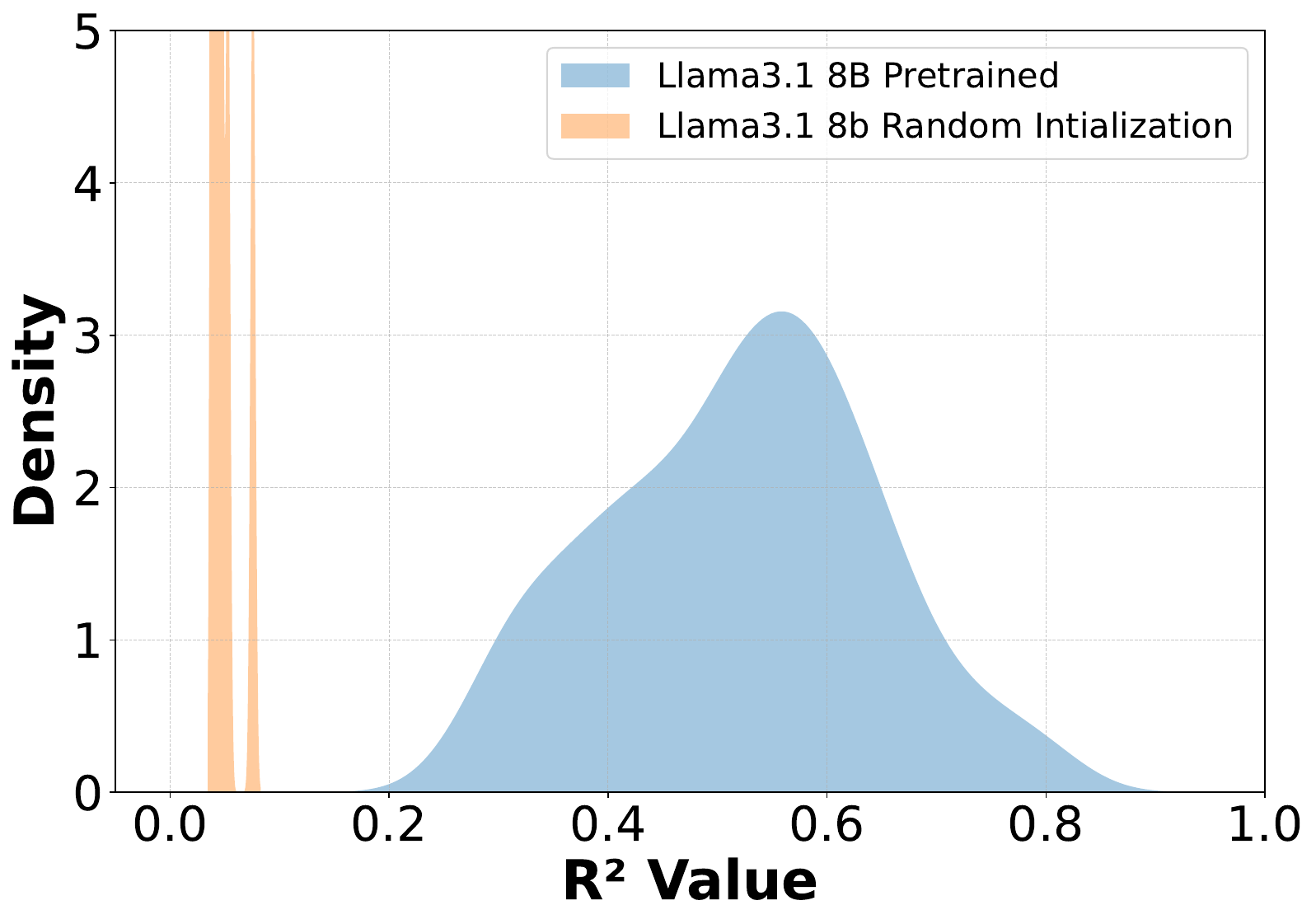}
  \end{minipage}
  \captionsetup{hypcap=false}
  \captionof{figure}{\textbf{Left: Pretraining aligns head subspaces.} Overlap dimension averaged across layers at different points during OLMo-2 training, indicating increasing intra-layer alignment. \textbf{Right: Training induces linear predictability.} Kernel-density estimate of $R^{2}$ for LLaMA-3.1 8B shows a broad moderate–high peak after training versus near-zero values at random initialization.}
  \label{fig:subspace_overlap_r2_density}
\end{center}

To gain finer-grained insight into this emergence, we use the publicly released intermediate checkpoints of the OLMo-2 model. This allows us to track the evolution of linear predictability from random initialization through various stages of pretraining (1B, 10B, 100B, and 1T tokens, up to the final model trained on $\approx$ 4T tokens). Concurrently, to analyze the structural origin of this linearity, we examine the overlap of the attention activation subspaces within layers for OLMo-2. Let $\mW_K^{(\ell)} = [\mW_{K,1}^{(\ell)}, \ldots, \mW_{K,H}^{(\ell)}] \in \R^{m \times H d_h}$ be the concatenation of all key projection matrices in a specific layer $\ell$. We measure the shared subspace using the \textit{overlap dimension}:
\[
\operatorname{OD}_H(\mW_K^{(\ell)}) = \sum_{i=1}^{H}\operatorname{rank}(\mW_{K,i}^{(\ell)}) - \operatorname{rank}(\mW_K^{(\ell)}).
\]
A higher $\operatorname{OD}_H$ indicates greater linear dependence among the column spaces of the individual head projection matrices within that layer. We measure the intra-layer subspace overlap for key, value, and query projection vectors at each checkpoint. \autoref{fig:subspace_overlap_r2_density} illustrates this evolution. At the very first checkpoint (100 steps) all three projections are nearly orthogonal—each layer shares less than 5 basis directions among its heads. As training unfolds, the overlap dimension grows monotonically—but not uniformly—across the three projections. Key subspaces expand the fastest, showing a steep, almost exponential rise that dominates late-stage checkpoints. Query projections follow the same trajectory, albeit with a gentler slope, while Value projections remain largely flat, indicating near-orthogonality throughout. This growing subspace overlap within layers directly mirrors, and likely underpins, the observed rise in intra-layer linear predictability of head activations.

\input{plots/section2_combined_fig} 

\textbf{Theoretical Analysis: Why Predictability is Low at Random Initialization?}
To further understand why significant linear predictability is absent before training, we theoretically analyze the case of randomly initialized projection matrices. We show that, with high probability, the best linear map to predict one randomly projected vector from another (or a set of others) incurs a substantial mean-squared error. Specifically, for a single-layer Transformer where input token embeddings $\vx$ are drawn from $\mathcal{N}(\vzero, \mI_m)$ and Key projection matrices $\mA, \mB \in \R^{m \times d_h}$ have i.i.d. Gaussian entries, the expected squared error of the best linear predictor is lower bounded by $\Omega(d_h m)$. This bound also holds if $\mA$ represents the concatenation of $n$ reference head projections ($m \times n d_h$) and $\mB$ is a single target head projection ($m \times d_h$). This theoretical result (Theorem~\ref{thm:main-lowerbound}) formally establishes that strong linear predictability between arbitrary head activations is not expected in a randomly initialized state, corroborating our empirical findings and highlighting that such structure must be learned.

\begin{theorem}
    \label{thm:main-lowerbound}
    Let $\mA, \mB \in \br^{m \times k}$ be two random matrices where each entry is sampled independently from $\mathcal{N}(0, 1)$ and $k \leq m/2$. Let $\vx \in \br^{m}$ be an input vector drawn from $\mathcal{N}(\vzero, \mI_m)$ independently of $\mA, \mB$. Then, there is an absolute constant $c_1>0$ such that
    \[
    \Pr\Bigl\{ 
    \inf_{\mC\in\R^{k\times k}}\E_{\vx}\bigl\|\vx^{\top}\mA\mC-\vx^{\top}\mB\bigr\|_2^2
     < \frac{1}{2}k(m-k)
    \Bigr\}
     \le 2\exp\bigl(-c_1 m\bigr).
    \]
    Equivalently, with probability at least $1-2e^{-c_1 m}$ we have
    \[
    \inf_{\mC}\Bigl [\E_{\vx \sim \mathcal{N}(\vzero, \mI_m)}\|\vx^{\top}\mA\mC-\vx^{\top}\mB\|_2^2 \Bigr ]
     \; \geq \; \frac{1}{2}mk.
    \]
\end{theorem}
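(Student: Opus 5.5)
The plan is to eliminate the expectation over $\vx$ and the infimum over $\mC$ in closed form. This reduces the statement to a lower-tail bound for a chi-squared variable.

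\emph{Step 1 (removing $\vx$).} For fixed $\mA,\mB,\mC$, write $\mathbf{M}=\mA\mC-\mB$. Then $\E_{\vx}\|\vx^\top \mathbf{M}\|_2^2=\tr(\mathbf{M}^\top \E[\vx\vx^\top]\mathbf{M})=\|\mathbf{M}\|_F^2$, using $\E[\vx\vx^\top]=\mI_m$. The quantity to bound is therefore $\inf_{\mC}\|\mA\mC-\mB\|_F^2$. This is a column-by-column least-squares problem, whose minimum is $\|(\mI_m-\mP_{\mA})\mB\|_F^2$. Here $\mP_{\mA}$ is the orthogonal projector onto the column space of $\mA$, which almost surely has dimension exactly $k$ because $k\le m$.

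\emph{Step 2 (distribution).} Condition on $\mA$. Then $\mI_m-\mP_{\mA}$ is a rank-$(m-k)$ orthogonal projector, and $\mB$ is independent of $\mA$ with i.i.d.\ $\mathcal N(0,1)$ entries. By rotation invariance, each column $(\mI_m-\mP_{\mA})\vb_j$ is a standard Gaussian vector in an $(m-k)$-dimensional subspace, and these columns are independent across $j$. Hence, conditionally on $\mA$ and therefore also unconditionally,
\[
\inf_{\mC}\E_{\vx}\|\vx^\top\mA\mC-\vx^\top\mB\|_2^2=\|(\mI_m-\mP_{\mA})\mB\|_F^2\sim\chi^2_{n},\qquad n=k(m-k).
\]
The same argument covers the multi-reference case of $n_{\mathrm{ref}}$ reference heads. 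There $\mA$ is $m\times n_{\mathrm{ref}}k$ and $\mC$ is $n_{\mathrm{ref}}k\times k$, and the degrees of freedom become $k(m-n_{\mathrm{ref}}k)$.

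\emph{Step 3 (concentration).} I would invoke the Laurent--Massart lower-tail inequality $\Pr\{\chi^2_n\le n-2\sqrt{nt}\}\le e^{-t}$. Taking $t=n/16$ gives $\Pr\{\chi^2_n<n/2\}\le e^{-n/16}$. Since $k\ge1$ and $m-k\ge m/2$, we have $n\ge m/2$, so the probability is at most $e^{-m/32}\le 2e^{-c_1 m}$ with $c_1=1/32$. This proves the first display.

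For the ``equivalently'' form, use $k\le m/2$ to get $k(m-k)\ge mk/2$. The bound then reads $\ge\tfrac12 k(m-k)\ge \tfrac14 mk$, so I would state it as $\Omega(mk)$. Obtaining the literal constant $\tfrac12 mk$ requires a smaller threshold fraction of $n$. Specifically, $\Pr\{\chi^2_n<(1-\epsilon)n\}\le e^{-\epsilon^2 n/4}$, which only yields a constant arbitrarily close to, but below, $\tfrac12 mk$. I would flag this constant mismatch in the writeup.

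The main technical point, though standard, is Step 2. One must justify that the projected Gaussian matrix is exactly chi-squared with $k(m-k)$ degrees of freedom, independently of $\mA$, which requires $\mA$ to have full column rank almost surely. Everything else is routine.
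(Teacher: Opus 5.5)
Your argument is correct, but the concentration step takes a different route from the paper's. Both proofs reduce the problem to $\sum_{j}\|\mR\vb_j\|_2^2$ with $\mR=\mI_m-\mP_{\mA}$. The paper then applies Hanson--Wright conditionally on $\mA$ to each column separately, showing $\|\mR\vb_j\|_2^2\ge\frac12(m-k)$ except with probability $2e^{-c_3m}$. It finishes with a union bound over the $k$ columns, absorbing the factor $k$ into the exponent via $k\le m/2$. You instead use rotation invariance to identify the whole residual as exactly $\chi^2_{k(m-k)}$, and apply a single Laurent--Massart lower tail.

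\textbf{What your route buys.} It gives an explicit constant ($c_1=1/32$) and avoids the union bound. It also yields the stronger tail $e^{-k(m-k)/16}\le e^{-mk/32}$. That stronger tail is what the appendix's restated claim $2\exp(-c_2mk)$ asserts, but the column-wise union bound does not deliver it.

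\textbf{What the paper's route buys.} It is more general: Hanson--Wright needs only independent sub-Gaussian entries in $\mB$. This is why the paper can claim an extension to Xavier-uniform initialization, whereas your exact $\chi^2$ identification is Gaussian-specific. You can have both benefits by applying Hanson--Wright once to $\mathrm{vec}(\mB)$ with the block-diagonal matrix $\mI_k\otimes\mR$. Its squared Frobenius norm is $k(m-k)$ and its operator norm is $1$, so it gives $2\exp(-c\,k(m-k)/4)$ for sub-Gaussian entries.

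\textbf{The constant.} Your flag is justified, and it locates an actual error in the paper's proof. Its final line asserts $k\cdot\frac12(m-k)\ge\frac12 mk$, which is reversed. Your exact distribution shows the $\frac12 mk$ form is false uniformly over $k\le m/2$. At $k=m/2$, the threshold $\frac12 mk$ equals $n=k(m-k)=\E[\chi^2_n]$, so the failure probability is slightly above $1/2$, not exponentially small. The correct companion statement is your $\frac14 mk$, or $\frac{1-\epsilon}{2}mk$ with $c_1$ depending on $\epsilon$.
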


\begin{proof}[Proof Sketch.]
The proof of the theorem is in Appendix~\ref{app:proof_lower}. We provide a sketch here. The problem reduces to deriving a lower bound for the matrix $\|\mA\mC-\mB\|_F^2$. See that, for any fixed $\mC \in \R^{k \times k}$, we have

\[
\E_{\vx \sim \mathcal{N}(\vzero, \mI_m)}\bigl\|\vx^{\top}(\mA\mC-\mB)\bigr\|_2^2
=\tr\bigl((\mA\mC-\mB)^{\top}(\mA\mC-\mB)\E_{\vx}[\vx \vx^{\top}]\bigr) =\|\mA\mC-\mB\|_F^2. 
\]

We use the fact that any Gaussian random matrix such as $\mA, \mB$ has full column rank almost surely. The regression problem $\inf_C\|\mA\mC-\mB\|_F^2$ has a unique minimizer $\mC^{*} = \mA^{\dagger}\mB$ where $\mA^{\dagger}$ is the Moore-Penrose pseudoinverse. The key insight then is to rewrite the expression $\|\mA\mA^{\dagger}\mB-\mB\|_F^2$ as $\|\mR\mB\|_F^2 = \sum_{j=1}^{k}\|\mR\vb_j\|_{2}^2$ where $\mR$ satisfies certain properties so that one can apply the Hanson-Wright inequality \cite[Thm. 6.2.2]{Vershynin_2018} to obtain the final bound. 
\end{proof}

\textbf{Discussion of Theoretical Result.}
The theorem formalizes the intuition that the column spaces of two independently drawn random projection matrices $\mA$ and $\mB$ are unlikely to be well-aligned. Therefore, projecting an input vector $\vx$ through them will typically yield vectors whose relationship is not well captured by a simple linear transformation, leading to high reconstruction error. This contrasts sharply with the low error (high $R^2$) observed in pretrained models, emphasizing that the alignment of these projection subspaces is a key outcome of the learning process. Since one of the key steps of the proof is the application of Hanson-Wright inequality, which holds for any subgaussian vectors, the proof can naturally be extended to other common initialization methods such as Xavier uniform or Xavier normal initialization \citep{glorot2010understanding}.

The monotonic growth of Key subspace overlap during pretraining, coinciding with the rise in linear predictability, suggests that gradient updates across heads sharing the same input $\mX$ may drive projection matrices toward aligned column spaces. While a full theoretical characterization of this process remains open, the empirical trajectory---from near-orthogonal initialization to substantially overlapping trained subspaces---indicates a systematic, not accidental, alignment. Note that while Theorem~\ref{thm:main-lowerbound} formally applies to the first layer under Gaussian inputs, the empirical evidence ($R^2 < 0.05$ across \emph{all} layers at initialization) suggests the qualitative conclusion extends throughout the network.

%% file: plots/sec2_linearity_bars_fig.tex
\begin{figure}[t]
  \centering
  \begin{minipage}[t]{0.3\linewidth}
    \centering
    \includegraphics[width=\linewidth]{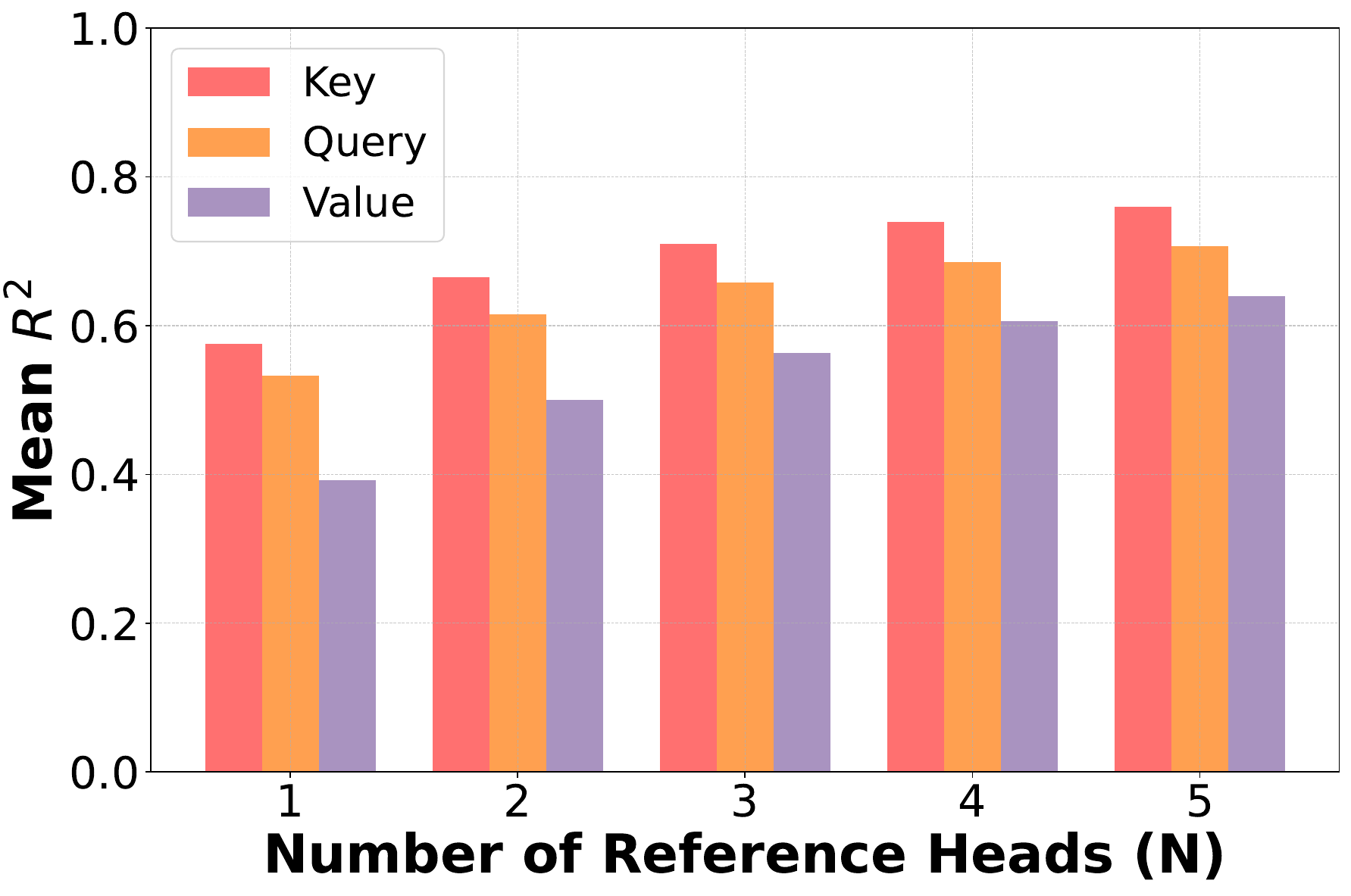}
  \end{minipage}
  \hfill
  \begin{minipage}[t]{0.3\linewidth}
    \centering
    \includegraphics[width=\linewidth]{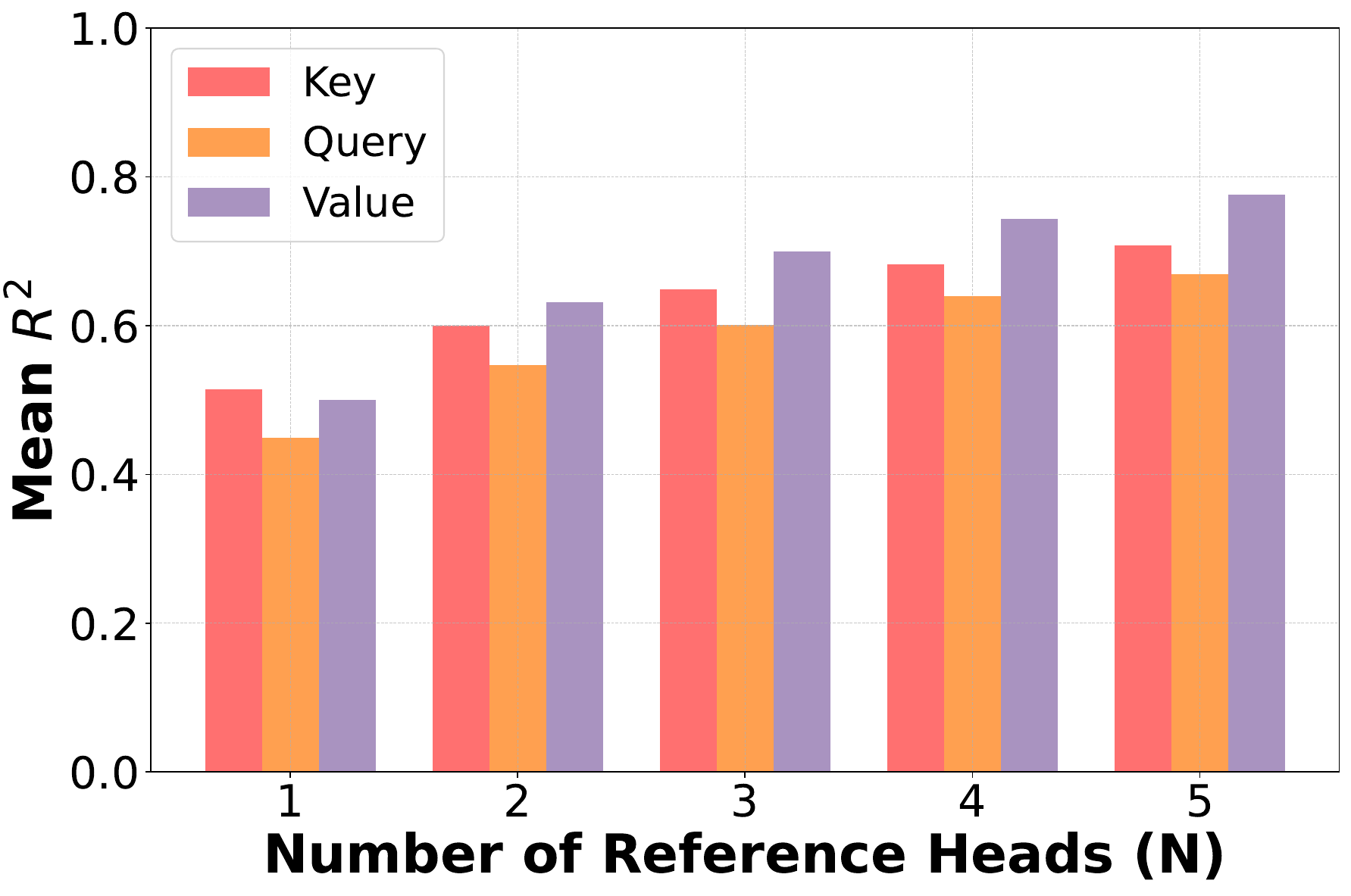}
  \end{minipage}
  \hfill
  \begin{minipage}[t]{0.3\linewidth}
    \centering
    \includegraphics[width=\linewidth]{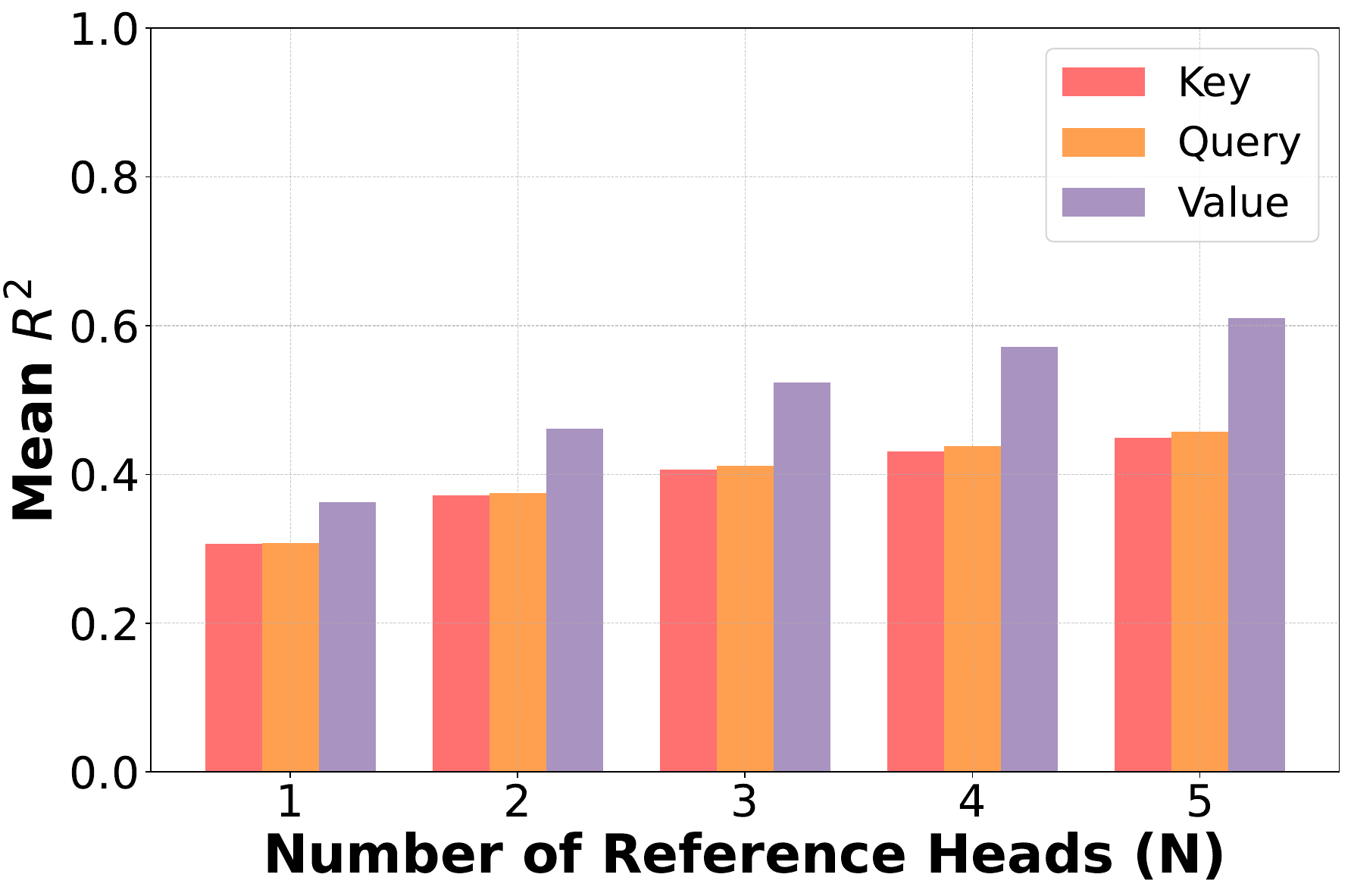}
  \end{minipage}
\caption{\textbf{Linearity is pervasive—keys, queries, and values all become more predictable as the number of reference heads grows.}
Shown is the mean $R^{2}$ when reconstructing {\textbf{key}}, 
{\textbf{query}}, and {\textbf{value}} 
activations on a shared 150-sequence
slice of C4 for three pretrained models:
(Left) LLaMA-3.1 8B (Middle) Falcon-3 10B (Right) OLMo-2 7B.
Across all panels, every stream (K, Q, V) shows a monotonic rise in $R^{2}$ as
additional references are added, confirming that head-level representations in
each model occupy a low-rank sub-space that can be captured with only a handful of heads. \vspace{-1.5em}}
  \label{fig:r2_trends}
\end{figure}

%% file: plots/subspace_overlap_and__inter-intra_linearity.tex
\begin{figure}[t]
  \centering
  \begin{minipage}[t]{0.44\linewidth}
    \centering
    \includegraphics[width=\linewidth]{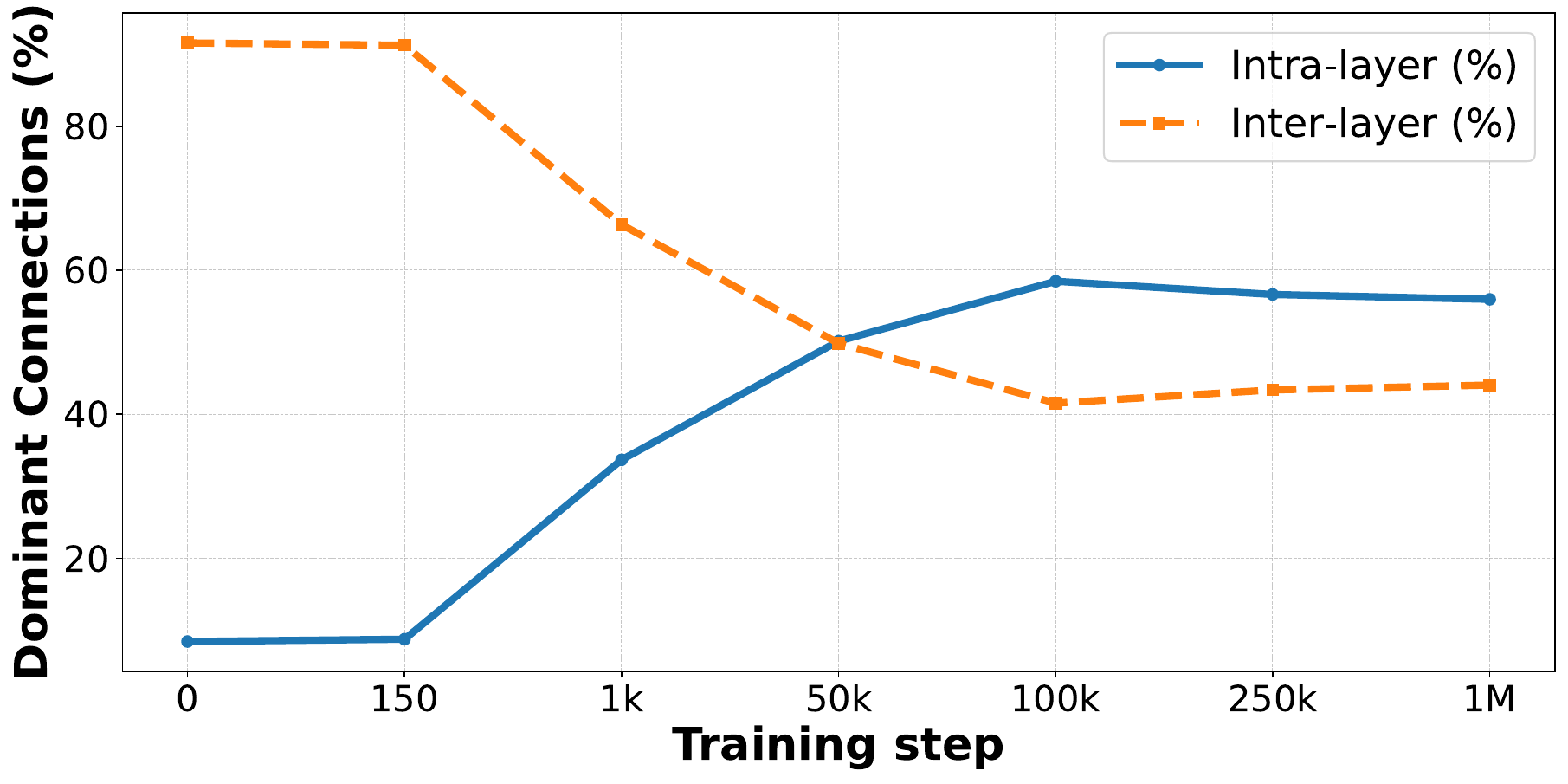}
    \label{fig:intra-inter_layer_connection_percentage1}
  \end{minipage}
  \hfill
  \begin{minipage}[t]{0.44\linewidth}
    \centering
    \includegraphics[width=\linewidth]{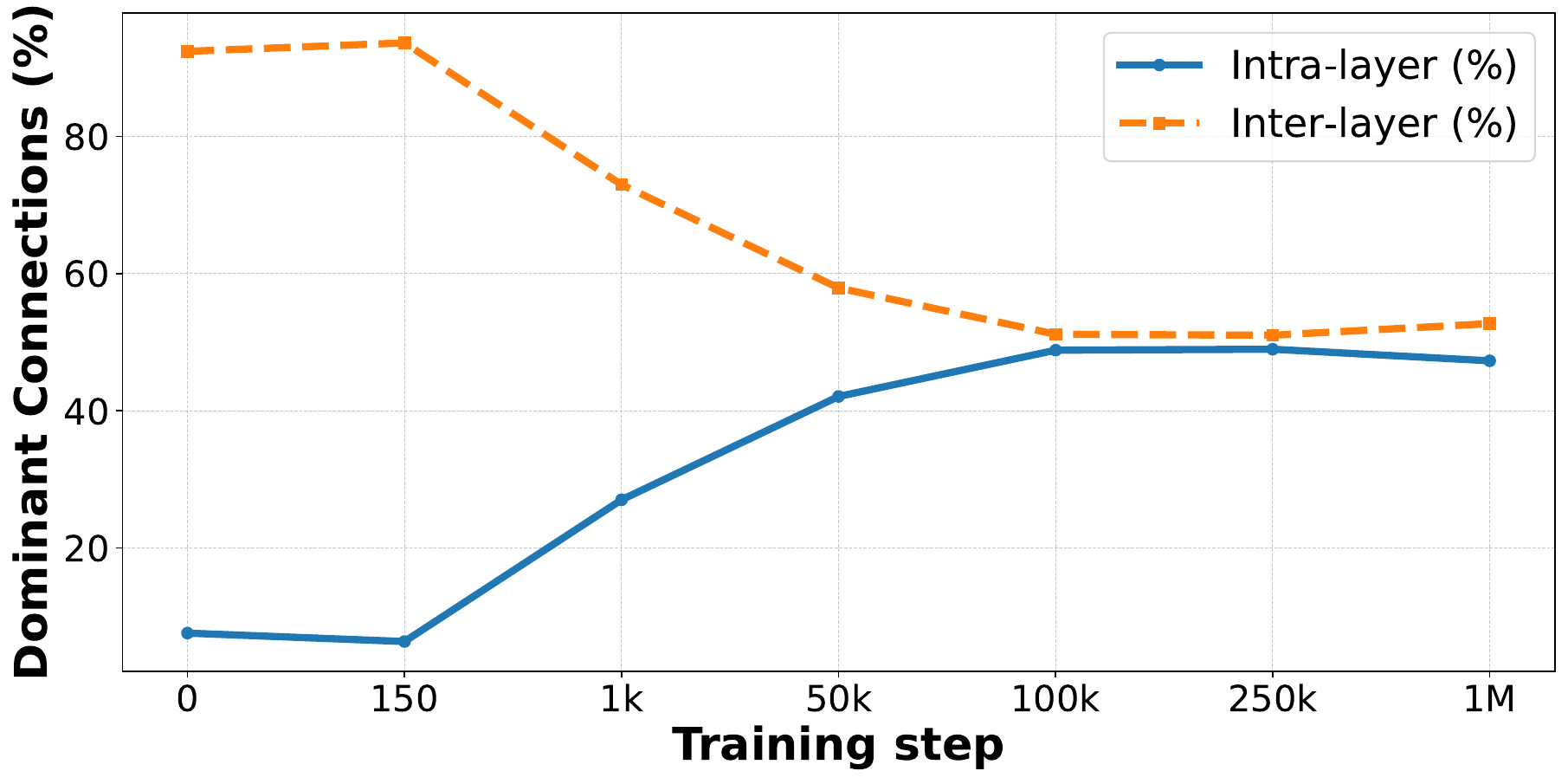}
    \label{fig:intra-inter_layer_connection_percentage2}
  \end{minipage}
  \caption{\textbf{Dominant predictors migrate from cross-layer to within-layer}. For OLMo-2 7B we track, across checkpoints, where a head’s strongest linear predictors reside.
(Left): percentage of heads whose \emph{single} best ($R^{2}$-max) predictor lies in the same layer versus a different layer.
(Right): the same percentages after aggregating each head’s \emph{five} best predictors.
At initialization almost all dominant links are inter-layer; by 50 k–100 k steps intra-layer links overtake and remain prevalent, indicating that pre-training progressively concentrates shared computation inside individual layers. \vspace{-1.5em}}

  \label{fig:intra-inter-layer-combined}
\end{figure}

%% file: plots/section2_combined_fig.tex


%% file: sections/kv_cache.tex
\section{Exploiting Linear Predictability for KV Cache Compression}
\label{sec:kvcache}

As established in Section~\ref{sec:linear-relations}, the internal Key (K) and Value (V) activations of attention heads in pretrained LLMs exhibit significant linear predictability. This finding opens a new avenue for optimizing LLM inference, particularly by addressing the memory bottleneck of the KV cache. During autoregressive decoding, LLMs generate tokens one by one. For each newly generated token, the model attends to all previously processed tokens. To avoid recomputing attention over the entire history at each step, the Key and Value vectors computed for each token in the sequence are stored in a \textit{KV cache} \citep{kwon2023efficientmemorymanagementlarge, agrawal2024tamingthroughputlatencytradeoffllm,zheng2024sglangefficientexecutionstructured,280922} As the sequence length grows, this cache can become very large, consuming a substantial portion of available GPU memory. This memory footprint can be a critical bottleneck, limiting the maximum batch size, context length, or even the feasibility of deploying very large models on consumer devices with limited GPU memory.

The linear predictability we observed suggests a direct strategy for KV cache compression: if the K and V states of a ``target'' head can be accurately reconstructed from a linear combination of K and V states from a few ``reference'' heads, we only need to store the KV states for the reference heads and a small set of learned projection weights.
Our approach differs fundamentally from concurrent predictor+quantization methods such as \textsc{AQUA-KV} \citep{shutova2025cache}: rather than storing quantized residuals for \emph{all} heads after subtracting a layer-to-layer predicted baseline, we \emph{eliminate} storage of target heads entirely, retaining full-precision states only for reference heads and reconstructing the rest on-the-fly via the linear maps learned in \S\ref{sec:linear-relations}. This head-elimination strategy directly leverages the intra-layer linear structure characterized above.
In the rest of this section, we outline the methodology for this strategy.

\input{diagrams/kv_cache_pred_flow_fig} 

\subsection{KV Cache Compression Methodology}
\label{subsec:kv_methodology}

Our KV cache compression strategy involves two main phases. First, a \textbf{calibration phase} is performed once per model to identify candidate heads for prediction and to learn the necessary linear projection weights. Second, during the \textbf{inference phase}, these learned projections are used to reconstruct the KV states of designated target heads on-the-fly, thereby reducing the need to store them explicitly. Figure \ref{fig:kvcachepredflow} illustrates this overall process.

During the \textbf{Calibration Phase}, we first collect a representative set of Key and Value activation states on a small calibration dataset. We perform forward passes on a small, randomly sampled subset (e.g., 1--5\% of a diverse dataset like C4, or a task-specific dataset if learning specialized projectors) and store the K and V activations from all heads at each layer. Based on these activations, we select a subset of heads whose KV states will be explicitly stored (reference heads, $\mathcal{H}_{\text{ref}}$) and another subset whose KV states will be predicted (target heads, $\mathcal{H}_{\text{target}}$). To ensure robust predictions, we first quantify pairwise linear relationships between all heads using the calibration data, as described in \autoref{subsec:empirical_evidence}. Based on these pairwise $R^2$ scores, we construct a directed graph $G=(V,E)$ where nodes $v \in V$ are attention heads and a weighted edge $(h_r \to h_t)$ captures the $R^2$ score between the reference head $h_r$ and target head $h_t$.

We then employ the \texttt{Selection of Target Heads to be Predicted} algorithm (Algorithm\ref{alg:ref_target_selection}) to partition the set of all heads $V$ into reference heads $V\setminus T$ and target heads $T$. The primary goal is to designate a fraction $f$ of heads as targets (e.g., $f=0.5$ for aiming for $2\times$ compression), while ensuring robust predictability for each target. The algorithm achieves this through a multi-step process. First, it performs a binary search for an optimal $r^{2}$ threshold $\tau$. This threshold is used to prune weak predictive edges from the initial graph $G$. During this process, we consider a head as a valid prediction candidate if it has at least $m$ incoming edges from other heads to ensure high quality predictions. The algorithm greedily selects heads for $T$ from the pool of valid candidates. Finally, for each selected target head $t$, if it has more than $m$ valid reference peers in $G^*$ after this process, any additional peers beyond the $m$ strongest ones (those with the highest $r^{2}$ values) are pruned away. This ensures that each target relies on a fixed, small number of its best predictors, keeping the learned linear maps compact. Heads not selected as targets but having outgoing predictive edges in $G^*$ form the set $V\setminus T$. Once the reference and target heads are identified, we learn the multivariate linear projections for each target head similar to the method described in Section\ref{subsec:empirical_evidence}.

\input{algorithms/ref_target_mapping} 

During the \textbf{Inference Phase}, we only store the KV states for the reference heads $\mathcal{H}_{\text{ref}}$ and reconstruct the KV states for the target heads $\mathcal{H}_{\text{target}}$ using the learned linear projections. By only storing KV states for $|\mathcal{H}_{\text{ref}}|$ heads and the small weight matrices $\mW^*$, we significantly reduce the overall KV cache memory footprint.

\subsection{Experiments and Results}
\label{subsec:kv_experiments_results}

Next, we evaluate the efficacy of our KV cache compression method. We focus on three main aspects:
\begin{enumerate}
    \item \textbf{Compression Efficacy}: What is the impact on downstream task accuracy when using our method to predict (e.g., 50\% of) KV states, compared to baseline full-cache inference?
    \item \textbf{Cross-Task Transferability}: Can linear regressors trained on one task generalize effectively to KV prediction on unseen tasks, both within and across domains?
\end{enumerate}

\textbf{Models and Benchmarks.} We validate our KV cache compression approach on Llama3 (8B), Falcon3 (10B), and Qwen3 (32B) model families. Benchmarks include Hellaswag \citep{zellers2019hellaswagmachinereallyfinish}, TruthfulQA \citep{lin2022truthfulqameasuringmodelsmimic}, PIQA \citep{bisk2019piqareasoningphysicalcommonsense}, ARC \citep{clark2018thinksolvedquestionanswering}, MMLU \citep{hendrycks2021measuringmassivemultitasklanguage}, MMLU Pro \citep{wang2024mmluprorobustchallengingmultitask}, Humaneval \citep{chen2021evaluatinglargelanguagemodels}, and MATH \citep{lewkowycz2022solvingquantitativereasoningproblems}. All experiments were conducted on H100-NVL GPUs with 80GB of HBM memory.

\textbf{Compression vs. Accuracy.} \autoref{tab:kv-pred-results-main} reports the primary results, comparing baseline accuracy with our compression method implementing 2\myx KV cache compression (predicting 50\% of heads). Across models, this approach achieves substantial memory reduction with accuracy trade-offs that vary by architecture. Qwen3-32B incurs an average accuracy drop of 5.5\,pp; Falcon3-10B incurs 4.5\,pp; LLaMA-3.1 8B shows a larger 9.9\,pp drop (Table~\ref{tab:kv-pred-results-main}).

\input{tables/KV_predict_main_results}

\textbf{Transferability of Linear Relations Across Tasks} A key question for practicality is whether the learned linear projections are task-specific or if they capture more general inter-head relationships. We investigate if regressors trained on a source task can effectively predict KV states on unseen target tasks.

Our experiments, summarized in \autoref{tab:key_value_prediction_table}, reveal that these linear relationships exhibit considerable transferability, especially \textit{within} the same domain (e.g., math to math). For instance, projections learned from GSM8K generalize well to other mathematical reasoning tasks. When transferring across disparate domains (e.g., math to code generation or psychology), performance can degrade, suggesting that optimal projections might capture some domain-specific nuances. However, cross-domain transfer is not always detrimental. Intriguingly, projections learned from the Psychology domain demonstrate strong generalization to MMLU Law (improving accuracy from a baseline of 22.25 to 24.88) and MMLU Health (accuracy drops by only $\approx 1.5\%$). This suggests that some learned relationships, perhaps those related to abstract or contextual reasoning, can be broadly applicable across human-centered reasoning tasks.

\input{tables/key_and_value_prediction}

\subsection{Disentangling Key and Value Compression}
\label{subsec:key-vs-value}

The subspace overlap analysis in \S\ref{sec:analysis} revealed that Key projection matrices develop far greater intra-layer alignment than Value matrices during pretraining---Key subspaces expand fastest while Value subspaces remain nearly orthogonal throughout training. This asymmetry predicts that Key states should be more amenable to linear reconstruction than Value states. We now test this prediction directly by disentangling the two streams.

Table~\ref{tab:kv-asymmetry-summary} reports average accuracy on the GSM-8k benchmark on Llama3.1 8b under three settings: Key-only prediction (half of Key heads reconstructed, Values stored normally), Value-only prediction, and joint K+V prediction (the setting evaluated in Table~\ref{tab:kv-pred-results-main} and Table~\ref{tab:key_value_prediction_table}). A clear hierarchy emerges.

\begin{table}[ht]
\centering
\caption{Average accuracy on the GSM8k under three compression settings (50\% of heads reconstructed in each case). Keys are substantially more resilient to linear approximation than Values, consistent with the asymmetric subspace overlap growth observed during pretraining (\S\ref{sec:analysis}).}
\label{tab:kv-asymmetry-summary}
\begin{tabular}{lcc}
\toprule
Compression setting & Avg.\ accuracy & Drop vs.\ baseline \\
\midrule
Baseline (no compression)            & 36.77\%  & ---           \\
Key-only (50\% of keys predicted)    & 34.80\%  & $-$1.97\,pp   \\
Value-only (50\% of values predicted)& 32.47\%  & $-$4.30\,pp   \\
Joint K+V (50\% of each predicted)   & 30.04\%  & $-$6.73\,pp   \\
\bottomrule
\end{tabular}
\end{table}

\textbf{Key compression is the least disruptive}: reconstructing half the Key heads drops average math accuracy by only 1.97\,pp. Value compression is more harmful at 4.30\,pp, and joint K+V compression is worst at 6.73\,pp. Furthermore, even a projector calibrated on a semantically distant domain (Psychology) can compress Keys while retaining roughly 92\% of baseline accuracy on math benchmarks; the same cross-domain projector applied to Values degrades performance far more sharply. Full per-task results for Key-only and Value-only compression are in Appendix~\ref{app:kv_cache}.

This asymmetry has a natural interpretation: because Key projection subspaces become highly aligned during pretraining, the linear maps between Key heads are well-conditioned and generalise across calibration domains. Value subspaces, remaining largely orthogonal, retain more task-specific structure that a single linear map cannot easily approximate.

%% file: diagrams/kv_cache_pred_flow_fig.tex
\begin{figure}[h]            
  \centering
  \includegraphics[width=0.9\linewidth]{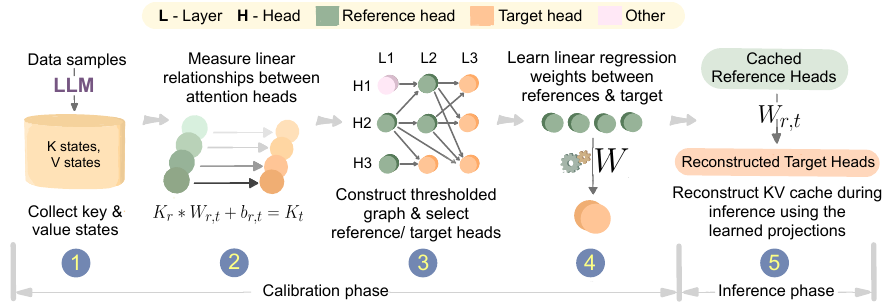}
  \caption{\textbf{Pipeline for KV-cache compression via head-level redundancy.}
(1) We log key/value activations on a calibration set.  
(2) Pair-wise linear probes quantify how well one head predicts another.  
(3) A thresholded graph selects a minimal reference set that covers all targets.  
(4) Compact weight matrices \(W_{r\!\to t}\) are trained for those links.  
(5) During inference, we cache only the reference heads and reconstruct the others on-the-fly, shrinking memory with minimal extra compute. }
  \label{fig:kvcachepredflow}
\end{figure}

%% file: algorithms/ref_target_mapping.tex
\begin{tcolorbox}[
    float,                    
  floatplacement=h,
  colback=gray!10,
  colframe=black!80,
  title={Selection of Target heads to be Predicted},
  fonttitle=\bfseries,
  fontupper=\small, 
  enhanced, 
  arc=2mm,
  boxrule=0.5pt,
  label={alg:ref-target}
]
\begin{algorithm2e}[H]
\DontPrintSemicolon
\label{alg:ref_target_selection}
\KwIn{Weighted, directed graph $G=(V,E,w)$.}
\KwData{Fraction of target heads to predict  $f$, minimum reference heads to be a valid target $m$.}
\KwResult{Target heads $T$, $r^{2}$ edge strength $\tau$}

\BlankLine
\textbf{low} $\gets 0$, \textbf{high} $\gets 1$ \tcp*[r]{binary-search bounds}

\While{not converged}{
    \tcp{1.  Prune the graph to significant edges}
    $\tau \gets (\textbf{low} + \textbf{high})/2$\;
    
    $G^* \gets$ subgraph of $G$ with edges $\ge \tau$\;

    \BlankLine
    \tcp{2.  Greedily select a set of target heads}
    \Indp
        $T \gets \emptyset$\;
        
        Candidates $\gets \{v \in V(G^*) \mid \text{indegree}(v) \ge m\}$\;
        
        \While{a valid candidate can be added to $T$}{
            \tcc{A candidate $c$ is \emph{valid} if every node in $T\cup\{c\}$\\
                 \hspace*{1.8em}is fed by at least $m$ non-target nodes.}
            $c \gets$ Valid Candidate $\setminus T$\;
            
            $T \gets T \cup \{c\}$\;
        }
    \Indm

    \BlankLine
    \tcp{3.  Adjust binary-search window}
    \eIf{$|T| < f \cdot |V|$}{
        \textbf{high} $\gets \tau$\;   \tcp*[r]{ too few targets}
    }{
        \textbf{low} $\gets \tau$\;    \tcp*[r]{too many targets}
    }
}
\BlankLine
\end{algorithm2e}
\end{tcolorbox}


%% file: tables/KV_predict_main_results.tex
\begin{table}[b]
  \caption{Evaluation accuracy (\%) across multiple benchmarks with and without 50\% KV prediction.}
  \label{tab:kv-pred-results-main}
  \centering
  \resizebox{\linewidth}{!}{%
  \begin{tabular}{llccccccc}
    \toprule
    \textbf{Model} & \textbf{Method} & \textbf{TruthfulQA} & \textbf{HellaSwag} & \textbf{MMLU Stem} & \textbf{GPQA} & \textbf{Winogrande} & \textbf{Average} \\
    \midrule
        \multirow{2}{*}{Qwen3 32B} 
        & Baseline        & 39.16 & 82.70 & 82.39 & 40.62 & 73.16 & 63.61 \\
        & 50\% KV Pred    & 40.02 & 77.02 & 74.21 & 31.91 & 67.24 & 58.08 \\
    \midrule
    \multirow{2}{*}{Falcon3 10B} 
        & Baseline        & 37.05 & 80.95 & 68.47 & 34.82 & 71.19 & 58.50 \\
        & 50\% KV Pred    & 35.00 & 74.23 & 61.90 & 27.67 & 71.42 & 54.04 \\
    \midrule
    \multirow{2}{*}{Llama3.1 8B} 
        & Baseline        & 37.08 & 79.28 & 58.70 & 29.68 & 73.87 & 55.72 \\
        & 50\% KV Pred    & 30.35 & 66.53 & 44.53 & 26.34 & 61.40 & 45.83 \\
    \bottomrule
  \end{tabular}
  }
\end{table}

%% file: tables/key_and_value_prediction.tex
\begin{table}[ht] 
  \centering
  \caption{Model performance across different tasks for \textbf{50\% key and value} predictions on Llama3.1 8b instruct}
  \label{tab:key_value_prediction_table}

  \small
  \begin{tabular}{@{} l l
                  S[table-format=2.2]
                  S[table-format=3.3]
                  S[table-format=2.2]
                  S[table-format=2.2] @{}}
    \toprule
    \multirow{2}{*}{Evaluation Domain} &
    \multirow{2}{*}{Task/Metric} & &
    \multicolumn{3}{c}{Calibration Domain} \\
    \cmidrule(lr){4-6}
     & & {Baseline} & {Maths} & {Coding} & {Psychology} \\
     & &            & \text{(GSM8K)}    & \text{(Humaneval)}      & \text{(MMLU Psyc)} \\ 
    \midrule
    \multirow{7}{*}{MATH}
      & GSM8k                     & 75.35 & 71.03 & 67.77 & 62.31 \\
      & Math Intermediate Algebra & 10.74 &  6.97 & 7.41 &  7.64 \\
      & Math Prealgebra           & 41.90 & 36.28 & 32.60 & 34.32 \\
      & Math Precalculus          & 11.42 &  6.59 & 6.77 & 7.14 \\
      & Math Algebra              & 42.40 & 26.11 & 22.41 & 20.97 \\
      & MMLU Pro Math             & 38.00 & 33.30 & 30.94 & 30.71 \\
      & \textbf{Average}          & \bfseries 36.77 & \bfseries 30.04 & \bfseries 27.98 & \bfseries 27.18 \\
    \midrule
    \multirow{2}{*}{CODING}
      & Humaneval                 & 58.35 & 20.12 & 39.63 & 12.8 \\
      & \textbf{Average}          & \bfseries 58.35 & \bfseries 20.12 & \bfseries 39.63 & \bfseries 12.8 \\
    \midrule
    \multirow{5}{*}{OTHERS}
      & MMLU Pro CS               & 39.02 & 38.78 & 34.63 & 37.07 \\
      & MMLU Pro Law              & 22.25 & 17.25 & 20.43 & 24.88 \\
      & MMLU Pro Health           & 47.92 & 36.18 & 37.04 & 46.45 \\
      & MMLU Pro Psychology       & 57.51 & 52.88 & 52.25 & 57.39 \\
      & \textbf{Average}          & \bfseries 41.68 & \bfseries 36.27 & \bfseries 36.087 & \bfseries 41.44 \\
    \bottomrule
  \end{tabular}
\end{table}

%% file: sections/relatedwork.tex
\section{Related Work}
\label{sec:related_work}

Our research builds upon two main areas: the interpretability of attention head redundancy and techniques for KV cache compression. Prior work in mechanistic interpretability has extensively studied attention head functions, revealing specialization and considerable redundancy \citep{clark2019bertattention, voita2019analyzing, olsson2022induction}. This has led to efficiency methods like head pruning based on importance scores or attention map correlations \citep{michel2019sixteen, agarwal2024chai}, and architectural modifications like GQA that share K/V projections \citep{ainslie2023gqa}. These approaches typically assess redundancy via downstream task impact or attention map similarity. Our work differs by focusing on a finer-grained phenomenon: the linear predictability of internal Key, Value, and Query vector activations themselves. We show that these raw activations often exhibit strong linear correlations, a distinct type of learned structure not directly captured by prior methods.

Complementary research on KV cache optimization aims to reduce its memory footprint during inference. Strategies include imposing structural constraints like low-rank projections \citep{wang2020linformer} or cross-layer factorization (\textsc{xKV}; \citealp{chang2025xkv}), and dictionary-based methods like \textsc{Lexico} \citep{kim2024lexico} that use sparse coding. Most closely related to our work, \textsc{AQUA-KV} \citep{shutova2025cache} learns compact \emph{layer-wise} linear predictors to improve KV-cache \emph{quantization}---reducing residual magnitudes and storing a quantized residual for \emph{every} head. Our method differs in goal and mechanism: we exploit inter-head linearity to \emph{eliminate} storing a subset of heads' KV caches entirely, retaining full-precision states only for reference heads and reconstructing the rest on-the-fly with no quantization. Also we only use a maximum of 5 heads unlike AQUA-KV which uses all key and value heads from the current layer and previous layer respectively. Beyond the compression scheme, we also characterize the linear predictability phenomenon itself---its emergence during pretraining and theoretical underpinnings---contributions orthogonal to \textsc{AQUA-KV}'s quantization focus.

%% file: sections/conclusion.tex
\vspace{-1em}
\section{Discussion and Final Remarks}
\label{sec:discussion}

This paper identifies and characterizes inter-head linear predictability as an emergent structural property of pretrained Transformers. Through experiments on four model families, we show that individual attention heads' Key, Value, and Query activations can be accurately reconstructed from 2--5 peer heads---a regularity absent at random initialization and developing steadily during pretraining. Analysis of OLMo-2 checkpoints reveals that intra-layer prediction strength and projection-subspace overlap grow monotonically, with Key subspaces aligning fastest, offering a mechanistic account of the phenomenon. A theoretical lower bound (Theorem~\ref{thm:main-lowerbound}) confirms that random initialization produces high MSE with overwhelming probability.

As a downstream application, we compress the KV cache by eliminating storage of approximately half the heads and reconstructing them on-the-fly. This yields $2\times$ memory reduction with accuracy trade-offs that vary by model: Qwen3-32B and Falcon3-10B incur average drops of 5.5\,pp and 4.5\,pp respectively, while LLaMA-3.1 8B shows a larger degradation of 9.9\,pp. We further find that Key compression is substantially more benign than Value compression, consistent with the asymmetric subspace overlap growth observed during training.

Several avenues remain open. Identifying the precise pretraining dynamics and architectural factors that drive inter-head alignment could yield fundamental insights into LLM learning. More sophisticated reconstruction techniques---nonlinear maps, dynamic reference selection, or hybrid quantization---may improve compression fidelity, particularly for models where linear prediction degrades more significantly. Beyond compression, the discovered subspace structure may inform head pruning, model distillation, and mechanistic interpretability.

\textbf{Limitations.} While we demonstrate the phenomenon, the exact causal mechanisms behind its emergence during pretraining are not well-understood and remain an open question. Our KV cache compression method, though effective, introduces a calibration step and a minor inference-time reconstruction latency; moreover, the degree of linear predictability and thus compression benefits can vary with the specific model, task, and data.






%% file: appendix/add_kvcache.tex
\section{Results for Transferability of Linear Relations Across Tasks}\label{app:kv_cache}

We investigate how replacing half of the \emph{key} (\(K\)) or \emph{value} (\(V\)) states with linear predictions impacts downstream accuracy.  In particular, we ask: \emph{Are keys or values more resilient to approximation, and does the calibration domain of the linear projector matter?}

Tables~ \ref{tab:key_value_prediction_table}, \ref{tab:key_prediction_transferable_table}, and \ref{tab:value_prediction_transefrable_table} disentangle three settings: joint prediction of both streams (Table~3), prediction of only \emph{keys} (Table~4), and prediction of only \emph{values} (Table~5).  
A clear hierarchy emerges: \textbf{key compression is    less harmful than value compression, and the joint setting is the most disruptive}.  
For the maths suite, the average score declines from \(36.77\,\%\) to \(34.80\,\%\) (\(-1.97\) pp) when half of the keys are reconstructed, but drops to \(32.47\,\%\) (\(-4.30\) pp) for value reconstruction and to \(30.04\,\%\) (\(-6.73\) pp) when both keys and values are predicted.

\input{tables/key_prediction}

\input{tables/value_prediction}

Calibrating on a matching domain helps reduce the accuracy drop, but value states remain much more fragile than keys.  Math‐calibrated projectors, for instance, preserve GSM8K accuracy under key compression (\(74.60\,\%\) vs.\ \(75.35\,\%\) baseline) yet still lose nearly \(3\) pp when compressing values. Surprisingly, even a projector trained on a seemingly unrelated domain—Psychology—can compress keys while still preserving roughly 92\% of the baseline accuracy on math benchmarks; when the same cross-domain projector is applied to the values, however, performance deteriorates far more sharply.

These results suggest that keys contain more linearly redundant or similar information than values across tasks from different domains.  Predicting them offers significant memory savings with only modest accuracy degradation.  Values are more task‐specific; compressing them—even partially—risks larger performance drops, and simultaneous K\,+\,V compression compounds these errors.  These findings advocate a hybrid strategy: aggressively compress keys while treating values conservatively, or calibrating them with domain‐specific regressors, to balance memory efficiency with robust task accuracy.

%% file: tables/key_prediction.tex
\begin{table}[h] 
  \centering
  \caption{Model performance across different tasks for \textbf{50\% key} predictions on Llama3.1 8b instruct}
  \label{tab:key_prediction_transferable_table}


  \small
  \begin{tabular}{@{} l l
                  S[table-format=2.2]
                  S[table-format=3.3]
                  S[table-format=2.2]
                  S[table-format=2.2] @{}}
    \toprule
    \multirow{2}{*}{Evaluation Domain} &
    \multirow{2}{*}{Task/Metric} & &
    \multicolumn{3}{c}{Calibration Domain} \\
    \cmidrule(lr){4-6}
     & & {Baseline} & {Maths} & {Coding} & {Psychology} \\
     & &            & \text{(GSM8K)}    & \text{(Humaneval)}      & \text{(MMLU Psyc)} \\ 
    \midrule
    \multirow{7}{*}{MATH}
      & GSM8k                     & 75.35 & 74.60 & 74.60 & 74.22 \\
      & Math Intermediate Algebra & 10.74 &  9.85 & 11.18 &  9.30 \\
      & Math Prealgebra           & 41.90 & 40.64 & 40.18 & 39.83 \\
      & Math Precalculus          & 11.42 &  9.70 & 10.99 & 10.07 \\
      & Math Algebra              & 42.40 & 37.23 & 37.07 & 34.50 \\
      & MMLU Pro Math             & 38.00 & 36.78 & 35.16 & 34.86 \\
      & \textbf{Average}          & \bfseries 36.77 & \bfseries 34.80 & \bfseries 34.86 & \bfseries 33.79 \\
    \midrule
    \multirow{2}{*}{CODING}
      & Humaneval                 & 58.35 & 51.82 & 45.73 & 46.34 \\
      & \textbf{Average}          & \bfseries 58.35 & \bfseries 51.82 & \bfseries 45.73 & \bfseries 46.34 \\
    \midrule
    \multirow{5}{*}{OTHERS}
      & MMLU Pro CS               & 39.02 & 38.04 & 37.32 & 36.00 \\
      & MMLU Pro Law              & 22.25 & 20.16 & 22.43 & 20.79 \\
      & MMLU Pro Health           & 47.92 & 47.31 & 46.33 & 47.31 \\
      & MMLU Pro Psychology       & 57.51 & 56.89 & 56.01 & 56.01 \\
      & \textbf{Average}          & \bfseries 41.68 & \bfseries 40.60 & \bfseries 40.52 & \bfseries 40.00 \\
    \bottomrule
  \end{tabular}
\end{table}

%% file: tables/value_prediction.tex
\begin{table}[h] 
  \centering
  \caption{Model performance across different tasks for \textbf{50\% value} predictions on Llama3.1 8b instruct}
  \label{tab:value_prediction_transefrable_table}

  \small
  \begin{tabular}{@{} l l
                  S[table-format=2.2]
                  S[table-format=3.3]
                  S[table-format=2.2]
                  S[table-format=2.2] @{}}
    \toprule
    \multirow{2}{*}{Evaluation Domain} &
    \multirow{2}{*}{Task/Metric} & &
    \multicolumn{3}{c}{Calibration Domain} \\
    \cmidrule(lr){4-6}
     & & {Baseline} & {Maths} & {Coding} & {Psychology} \\
     & &            & \text{(GSM8K)}    & \text{(Humaneval)}      & \text{(MMLU Psyc)} \\ 
    \midrule
    \multirow{7}{*}{MATH}
      & Gsm8k                     & 75.35 & 72.78 & 69.21 & 68.61 \\
      & Math Intermediate Algebra & 10.74 &  9.41 &  9.30 &  8.41 \\
      & Math Prealgebra           & 41.90 & 40.64 & 34.90 & 42.13 \\
      & Math Precalculus          & 11.42 &  8.24 &  7.69 &  7.87 \\
      & Math Algebra              & 42.40 & 30.41 & 27.96 & 28.47 \\
      & MMLU Pro Math             & 38.00 & 33.38 & 33.75 & 33.67 \\
      & \textbf{Average}          & \bfseries 36.77 & \bfseries 32.47 & \bfseries 30.46 & \bfseries 31.526 \\
    \midrule
    \multirow{2}{*}{CODING}
      & Humaneval                 & 58.35 & 42.68 & 49.39 & 35.36 \\
      & \textbf{Average}          & \bfseries 58.35 & \bfseries 42.68 & \bfseries 49.39 & \bfseries 35.36 \\
    \midrule
    \multirow{5}{*}{OTHERS}
      & MMLU Pro CS               & 39.02 & 36.58 & 36.34 & 36.82 \\
      & MMLU Pro Law              & 22.25 & 19.61 & 18.25 & 22.25 \\
      & MMLU Pro Health           & 47.92 & 40.46 & 37.16 & 47.67 \\
      & MMLU Pro Psychology       & 57.51 & 52.38 & 51.87 & 57.89 \\
      & \textbf{Average}          & \bfseries 41.68 & \bfseries 37.25 & \bfseries 35.905 & \bfseries 41.157 \\
    \bottomrule
  \end{tabular}
\end{table}

%% file: appendix/add_lin_pred.tex
\section{Additional Results on Linear Predictability of Heads}\label{app:add_lin}
\noindent
\textbf{Trends in inter-layer linear connection.} In this section, we quantify how the strength and prevalence of dominant linear links among attention heads vary with their inter‐layer distance.  To do so, we first construct a directed graph \(G\) whose nodes correspond to heads (parsed as \(\text{(layer, head)}\) pairs) and whose edge weights are the \(R^2\) scores from pairwise OLS regressions of key–state activations. Edges are split into “close” and “far” windows using two distance thresholds (\(\le2\) vs.\ \(\ge3\) layers, and \(\le4\) vs.\ \(\ge5\) layers). For each proximity category, we aggregate raw edge weights and normalize them to yield the percentage contribution of that category within its window (``\% Conn'') and compute the corresponding mean \(R^2\).  Table \ref{tab:proximity_density_callout} then reports these two metrics for both 2- and 4-layer windows, showing that nearby links contribute over 53\% of each window’s normalized weight and consistently exhibit higher predictability than distant links. Far-layer connections remain substantial (\(\approx 46\% \) ) yet are consistently weaker, underscoring a strong local alignment in the attention subspace despite the presence of non-local dependencies.
\input{tables/near_far_connectivity}

\textbf{Distributions of linear predictability across different tasks and models.}
In this section, we further demonstrate that inter-head linear dependencies persist across both downstream benchmarks and pretraining corpora.  We consider two representative settings: (i) LLaMA-3.1 8B evaluated on GSM8k, and (ii) Olmo-2 7B evaluated on its own pretraining data.  Figures \ref{fig:r2_cdf_kde_llama3_gsm8k} and \ref{fig:r2_cdf_kde_olmo2_train} respectively plot, for each projection type (Key left, Query center, Value right), (a) the cumulative distribution function (CDF) of \(R^2\) and (b) the kernel-density estimate (KDE) of the same, as the number of reference heads \(N\) varies from 1 to 5.  

Across both models and datasets, increasing \(N\) yields a clear rightward shift in the CDFs and a corresponding movement of KDE peaks toward higher \(R^2\), with the largest marginal gain between \(N=1\) and \(N=2\) and diminishing returns beyond \(N=4\). Finally, GSM8k exhibits slightly stronger linear alignment (median \(R^2\approx0.85\) for Keys at \(N=5\)) compared to Olmo-2’s raw training data, indicating that downstream tasks can amplify intrinsic linear structure in attention head activations.

\begin{figure}[ht]
  \centering
  \begin{subfigure}[t]{0.31\linewidth}
    \centering
    \includegraphics[width=\linewidth]{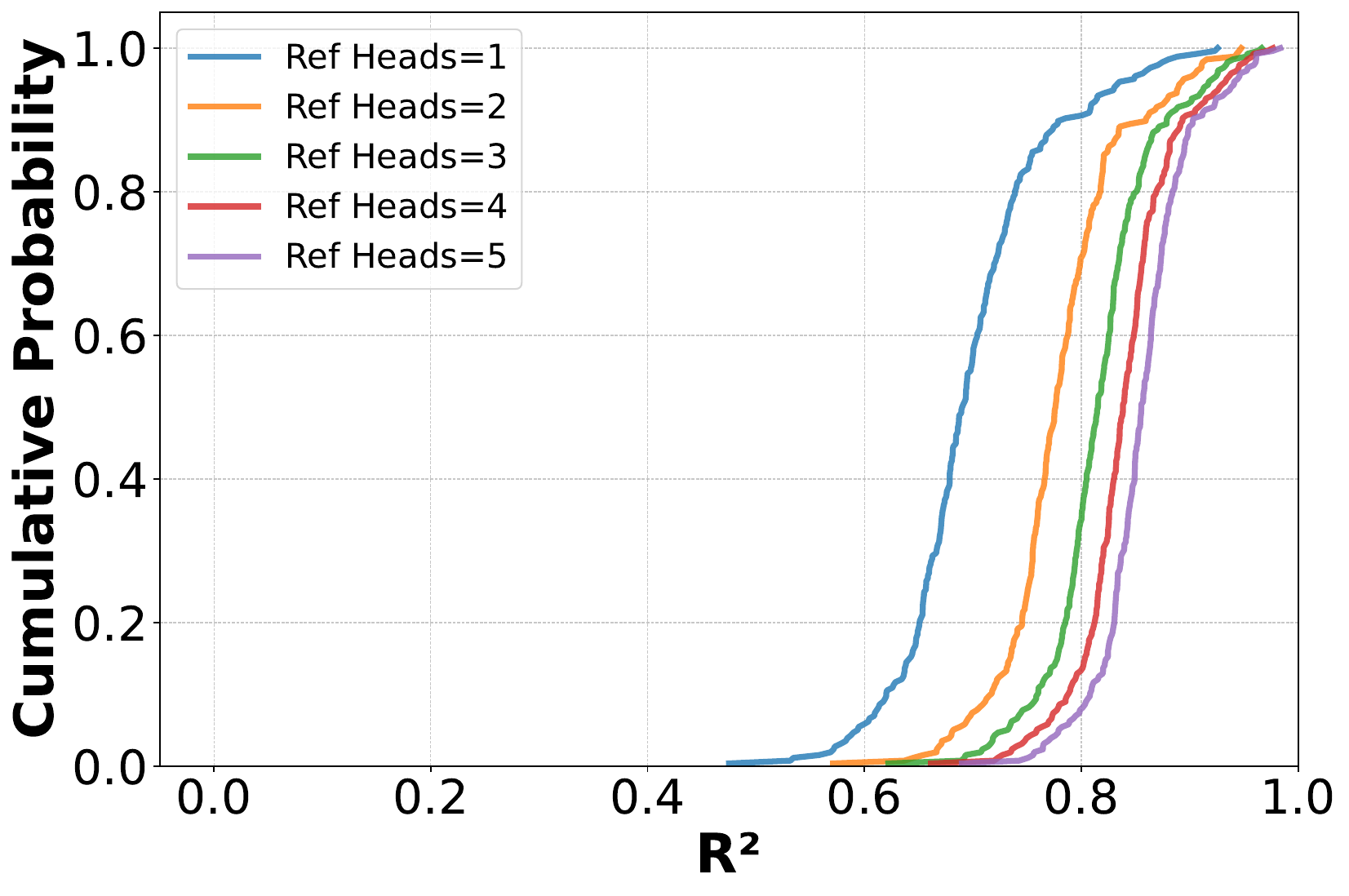}
    \subcaption{Key (K) on GSM8k}
    \label{subfig:cdf_key_gsm8k}
  \end{subfigure}\hfill
  \begin{subfigure}[t]{0.31\linewidth}
    \centering
    \includegraphics[width=\linewidth]{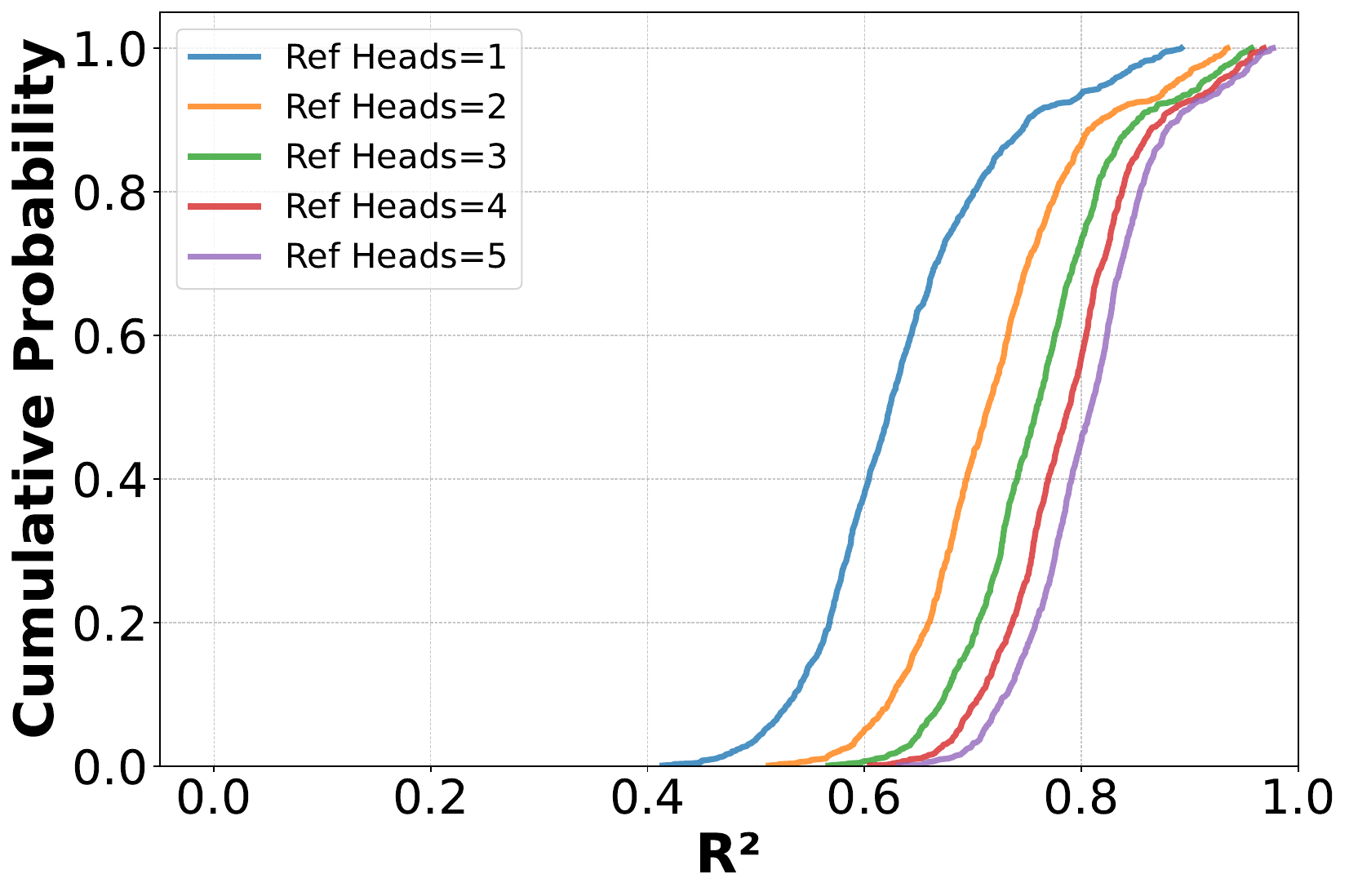}
    \subcaption{Query (Q) on GSM8k}
    \label{subfig:cdf_query_gsm8k}
  \end{subfigure}\hfill
  \begin{subfigure}[t]{0.31\linewidth}
    \centering
    \includegraphics[width=\linewidth]{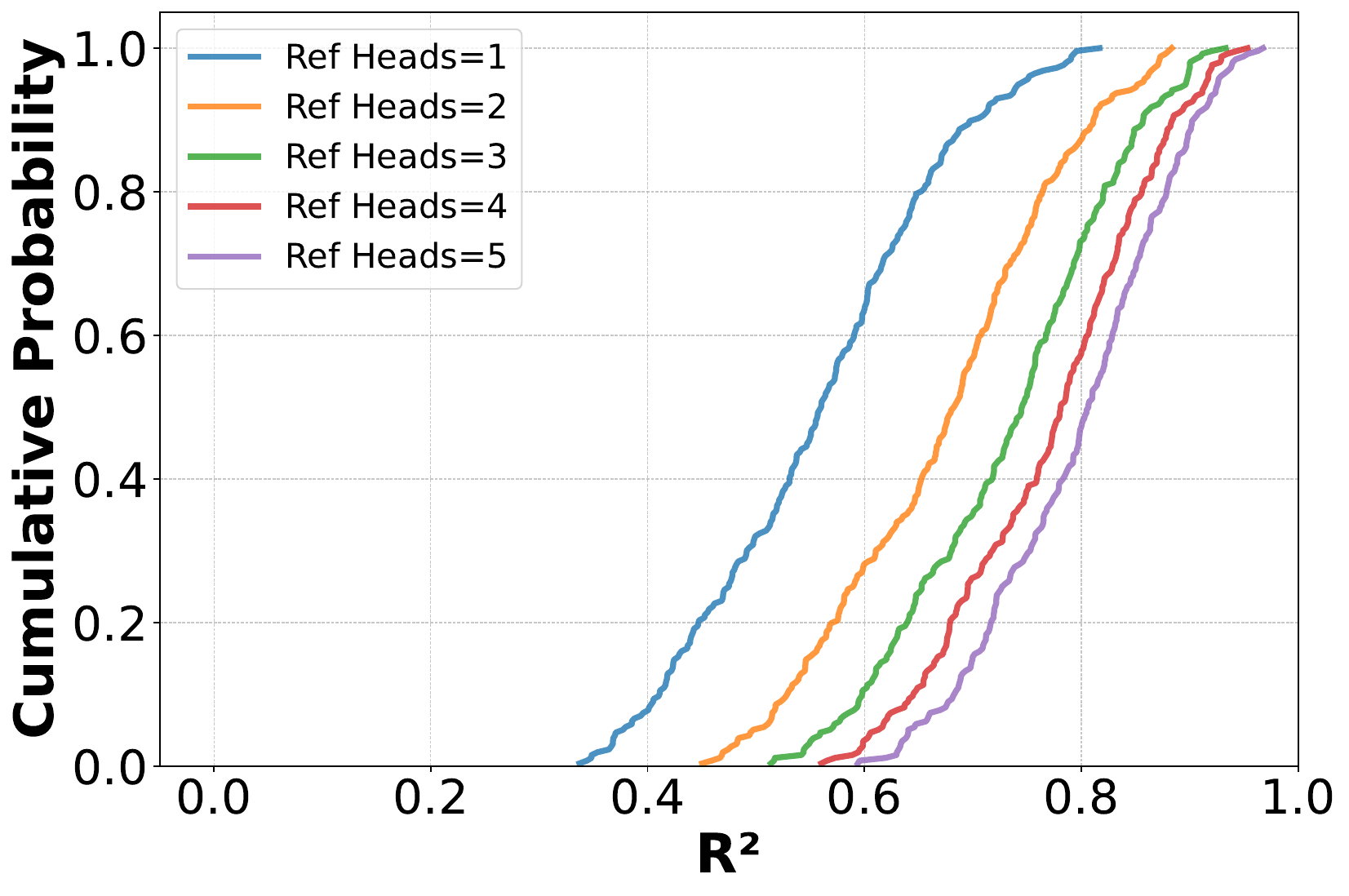}
    \subcaption{Value (V) on GSM8k}
    \label{subfig:cdf_value_gsm8k}
  \end{subfigure}

  \vspace{0.6em}

  \begin{subfigure}[t]{0.31\linewidth}
    \centering
    \includegraphics[width=\linewidth]{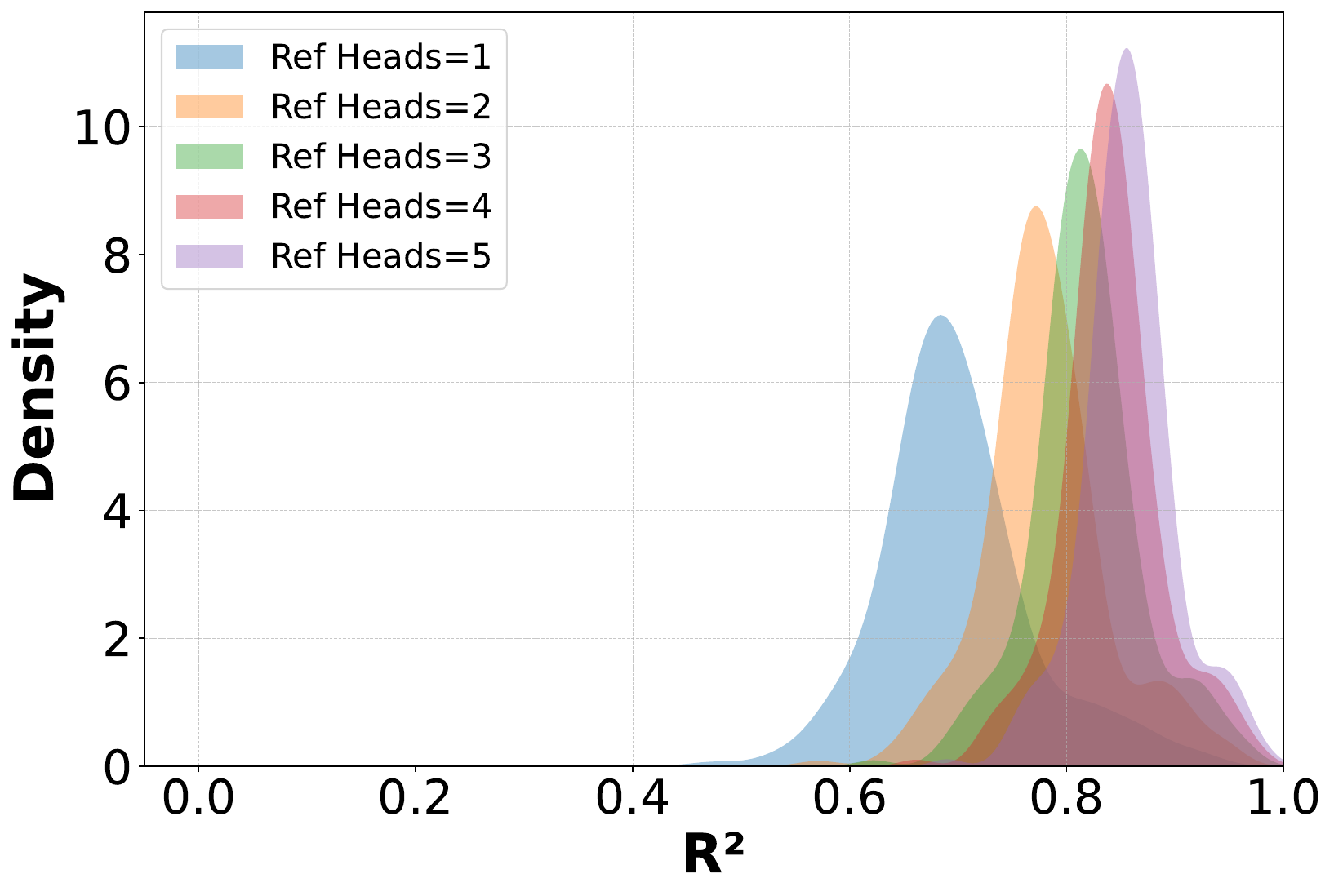}
    \subcaption{Key (K) on GSM8k}
    \label{subfig:kde_key_gsm8k}
  \end{subfigure}\hfill
  \begin{subfigure}[t]{0.31\linewidth}
    \centering
    \includegraphics[width=\linewidth]{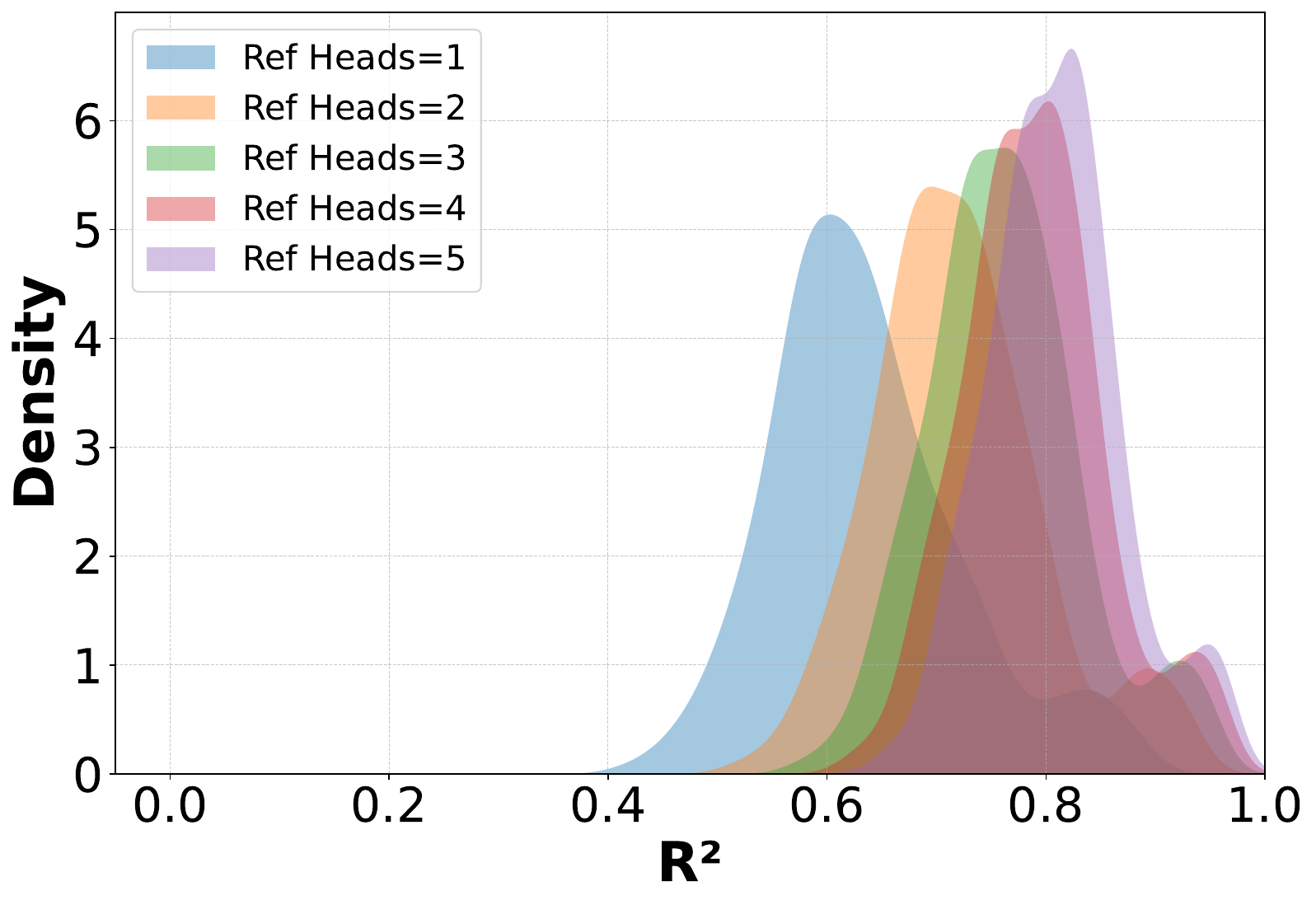}
    \subcaption{Query (Q) on GSM8k}
    \label{subfig:kde_query_gsm8k}
  \end{subfigure}\hfill
  \begin{subfigure}[t]{0.31\linewidth}
    \centering
    \includegraphics[width=\linewidth]{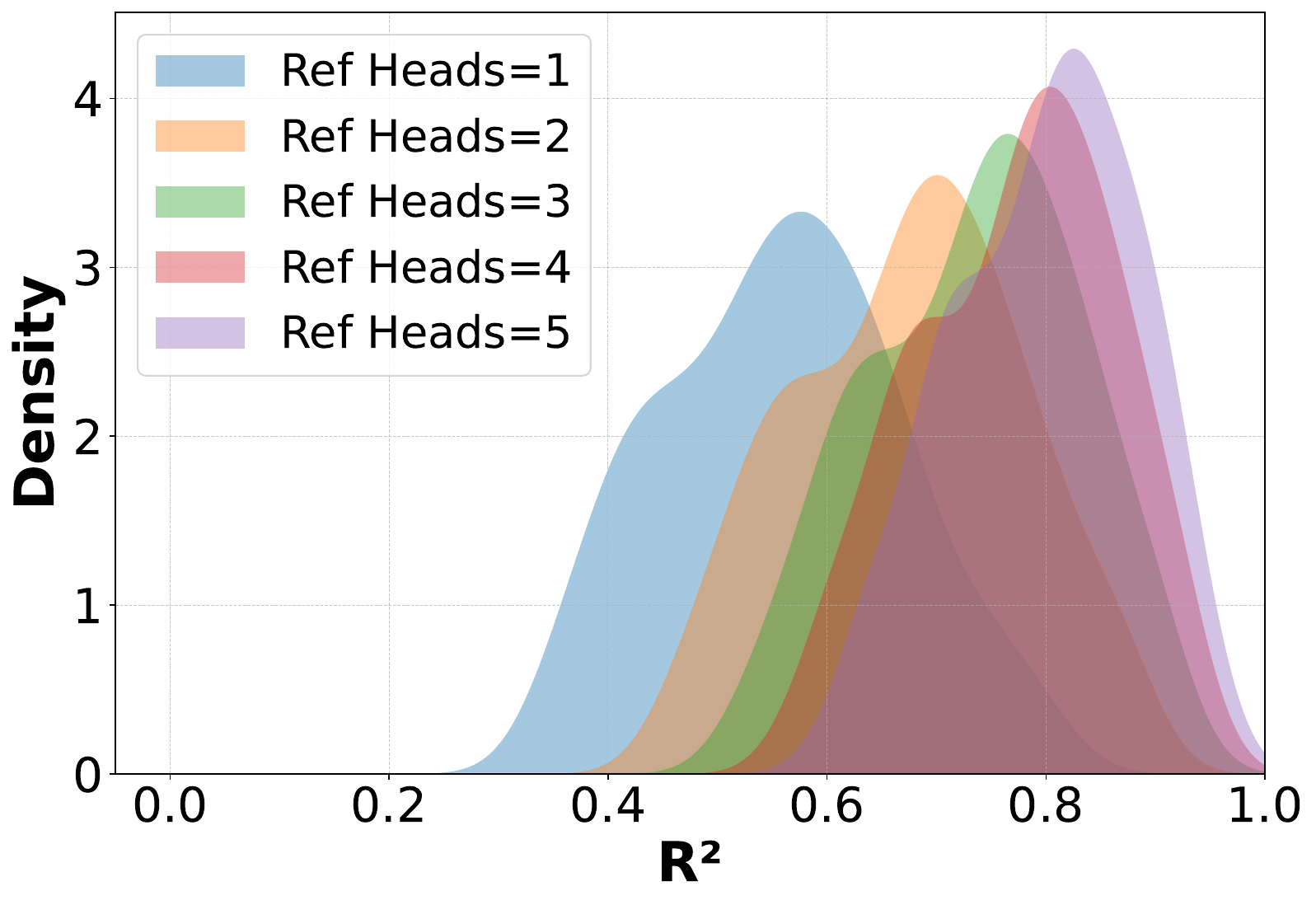}
    \subcaption{Value (V) on GSM8k}
    \label{subfig:kde_value_gsm8k}
  \end{subfigure}

  \caption{\textbf{GSM8k – Distribution of linear predictability (\(R^2\)) for LLaMA-3.1 8B.}  
           Top row: cumulative distribution functions (CDFs); bottom row: kernel-density estimates (KDEs).  
           Ordering (left→right): Key, Query, Value.}
  \label{fig:r2_cdf_kde_llama3_gsm8k}
\end{figure}

\begin{figure}[ht]
  \centering
  \begin{subfigure}[t]{0.31\linewidth}
    \centering
    \includegraphics[width=\linewidth]{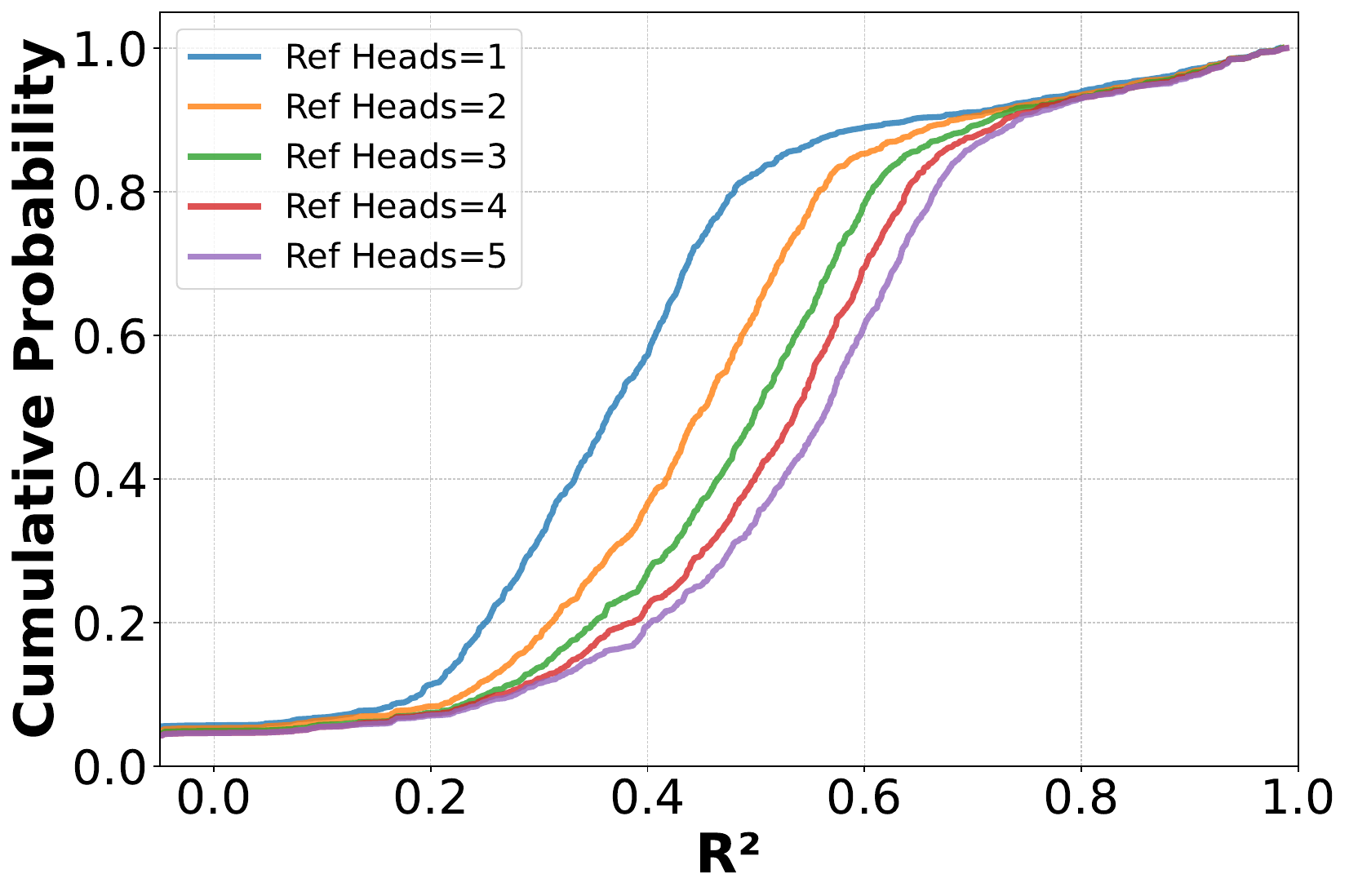}
    \subcaption{Key (K) on Olmo Train}
    \label{subfig:cdf_key_olmo2}
  \end{subfigure}\hfill
  \begin{subfigure}[t]{0.31\linewidth}
    \centering
    \includegraphics[width=\linewidth]{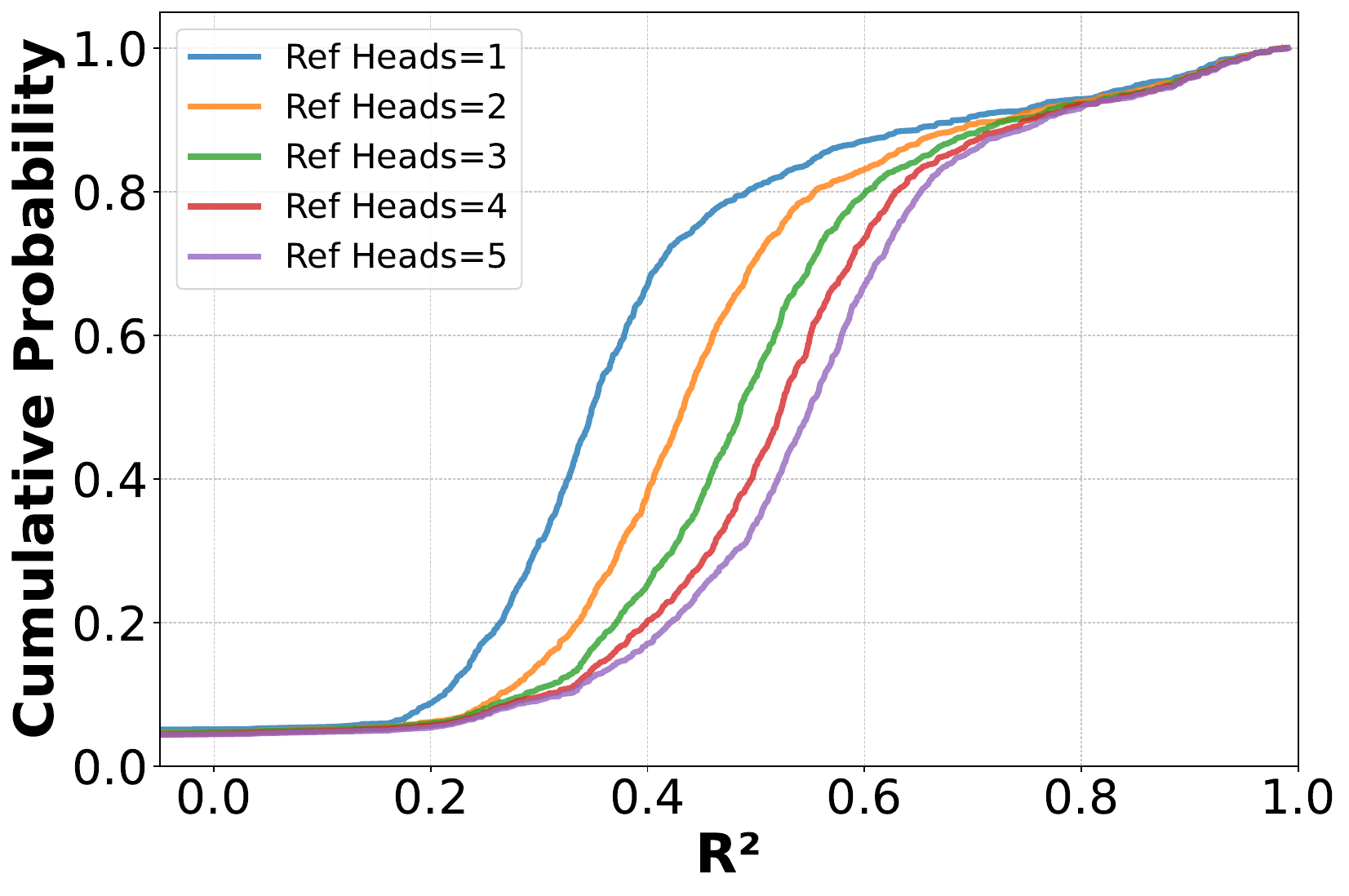}
    \subcaption{Query (Q) on Olmo Train}
    \label{subfig:cdf_query_olmo2}
  \end{subfigure}\hfill
  \begin{subfigure}[t]{0.31\linewidth}
    \centering
    \includegraphics[width=\linewidth]{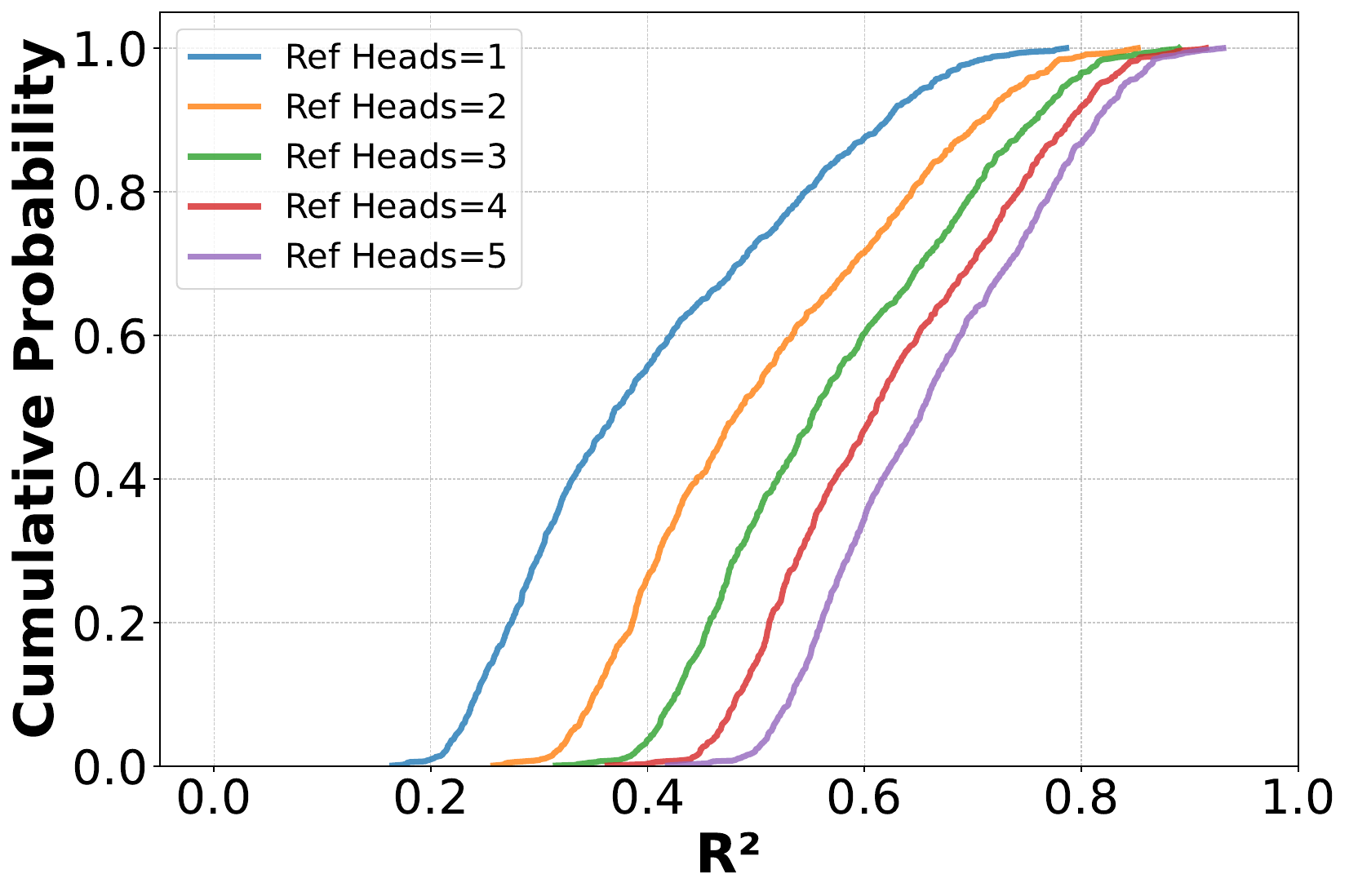}
    \subcaption{Value (V) on Olmo Train}
    \label{subfig:cdf_value_olmo2}
  \end{subfigure}

  \vspace{0.6em}

  \begin{subfigure}[t]{0.31\linewidth}
    \centering
    \includegraphics[width=\linewidth]{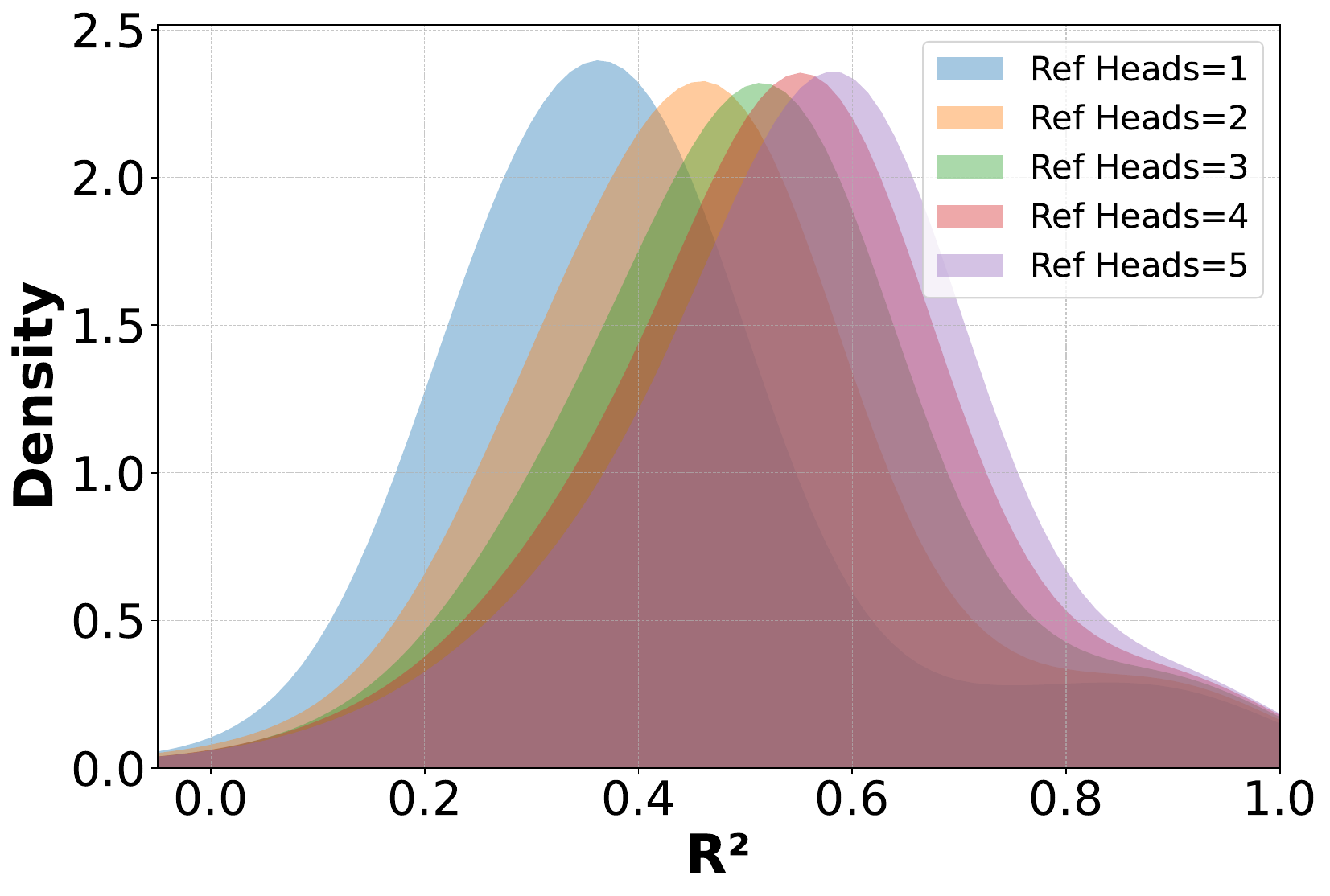}
    \subcaption{Key (K) on Olmo Train}
    \label{subfig:kde_key_olmo2}
  \end{subfigure}\hfill
  \begin{subfigure}[t]{0.31\linewidth}
    \centering
    \includegraphics[width=\linewidth]{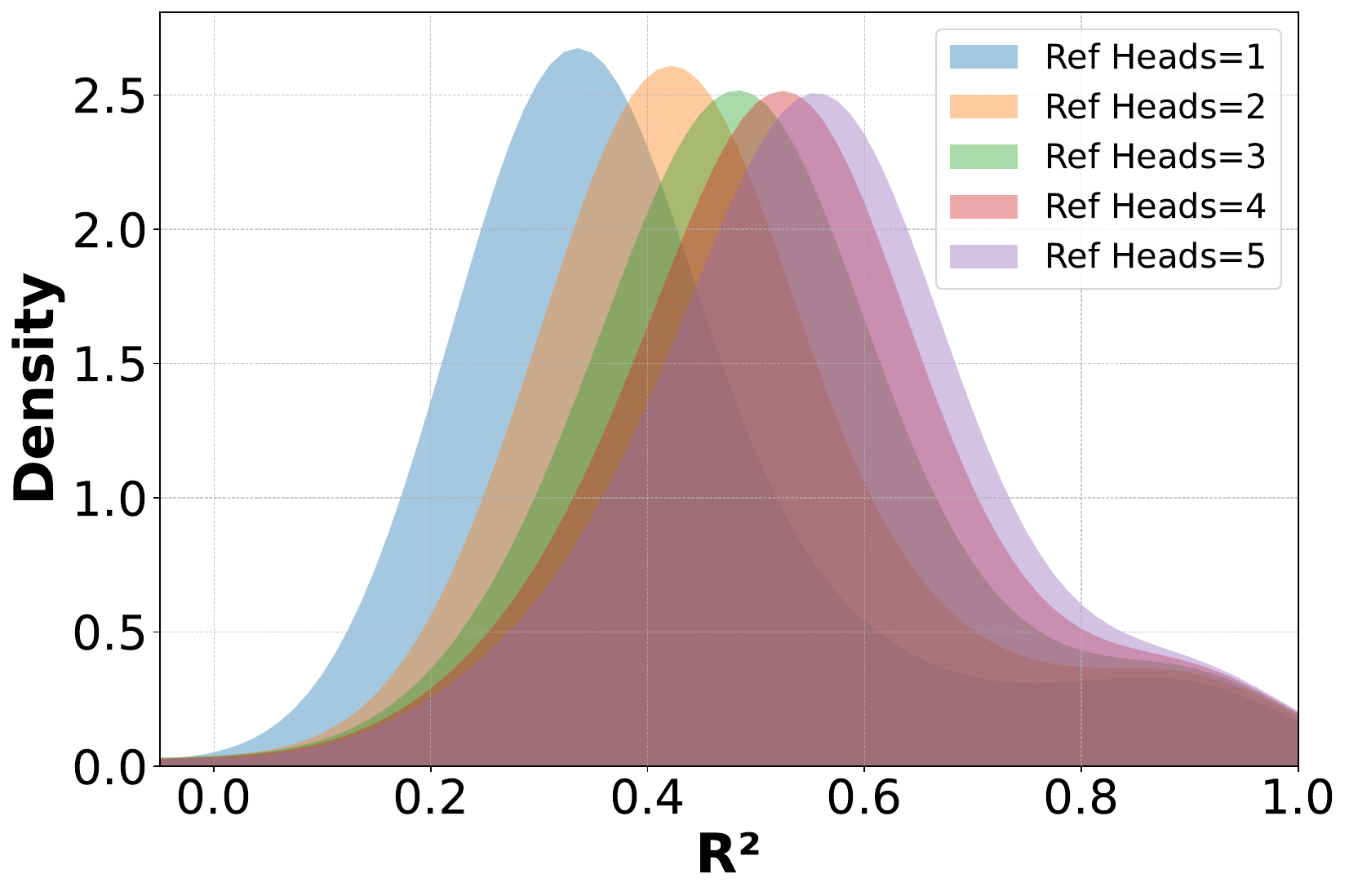}
    \subcaption{Query (Q) on Olmo Train}
    \label{subfig:kde_query_olmo2}
  \end{subfigure}\hfill
  \begin{subfigure}[t]{0.31\linewidth}
    \centering
    \includegraphics[width=\linewidth]{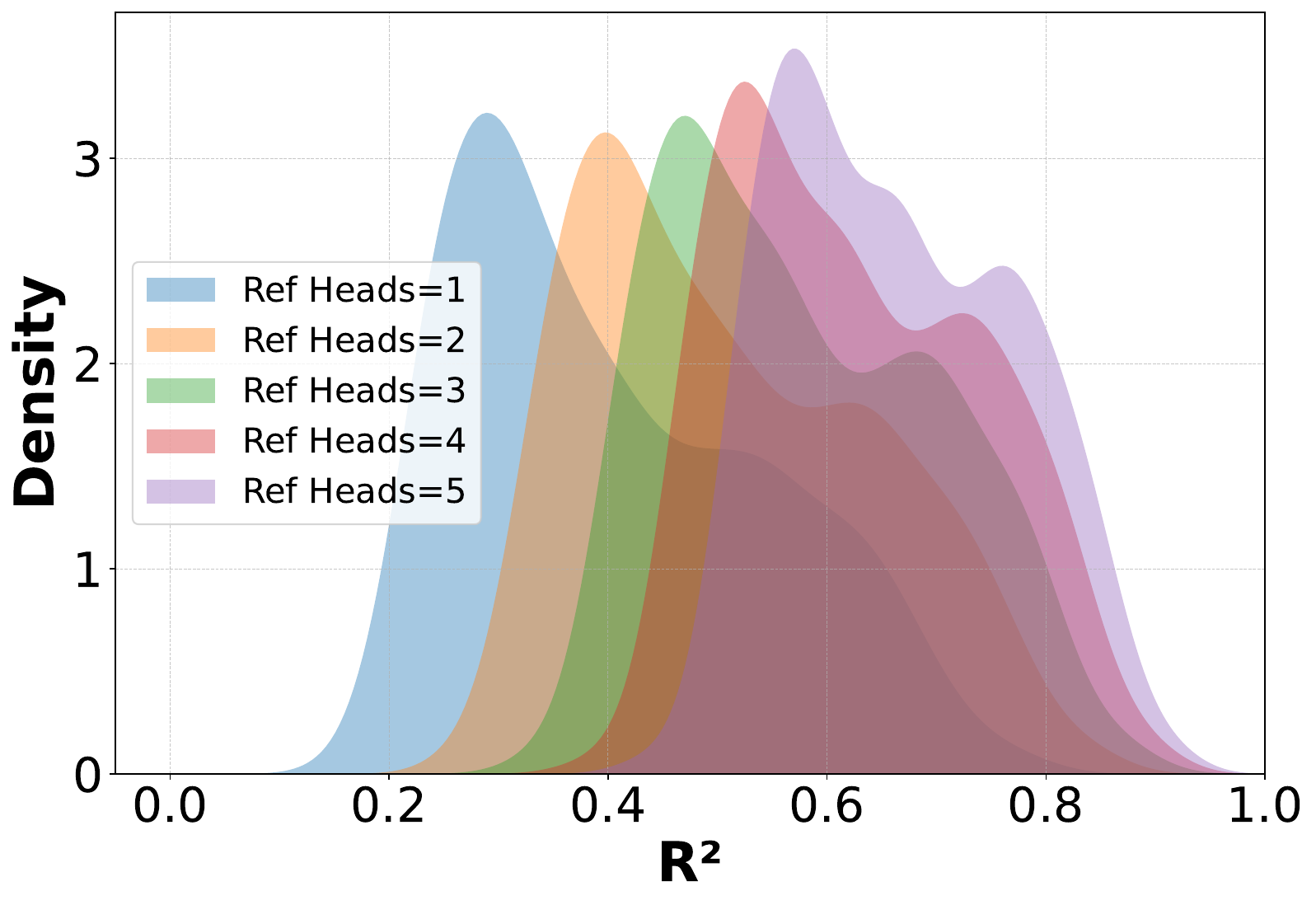}
    \subcaption{Value (V) on Olmo Train}
    \label{subfig:kde_value_olmo2}
  \end{subfigure}

  \caption{\textbf{Olmo 2 7B – Distribution of linear predictability (\(R^2\)) on its pre-training corpus.}  
           Top row: cumulative distribution functions (CDFs).  
           Bottom row: kernel-density estimates (KDEs).  
           Left→right in each row: Key, Query, Value.}
  \label{fig:r2_cdf_kde_olmo2_train}
\end{figure}

\textbf{Intensity of linear relations across layers.} As shown in Figure~\ref{fig:heatmap_linear_predictability} for LLaMA-3.1 8B on C4, the \(R^2\) for each attention head—when linearly predicted from its top reference heads—varies in intensity across layers. While we observe higher linear redundancy in some early layers and lower redundancy in deeper layers for this specific model and dataset, these patterns should not be taken as universally representative. Different architectures and downstream tasks may exhibit distinct layer‐wise profiles of predictability; the only consistent finding is that the density of linear predictability fluctuates across depth of the model.

\begin{figure}[t]
  \centering
  \begin{minipage}[t]{0.44\linewidth}
    \centering
    \includegraphics[width=\linewidth]{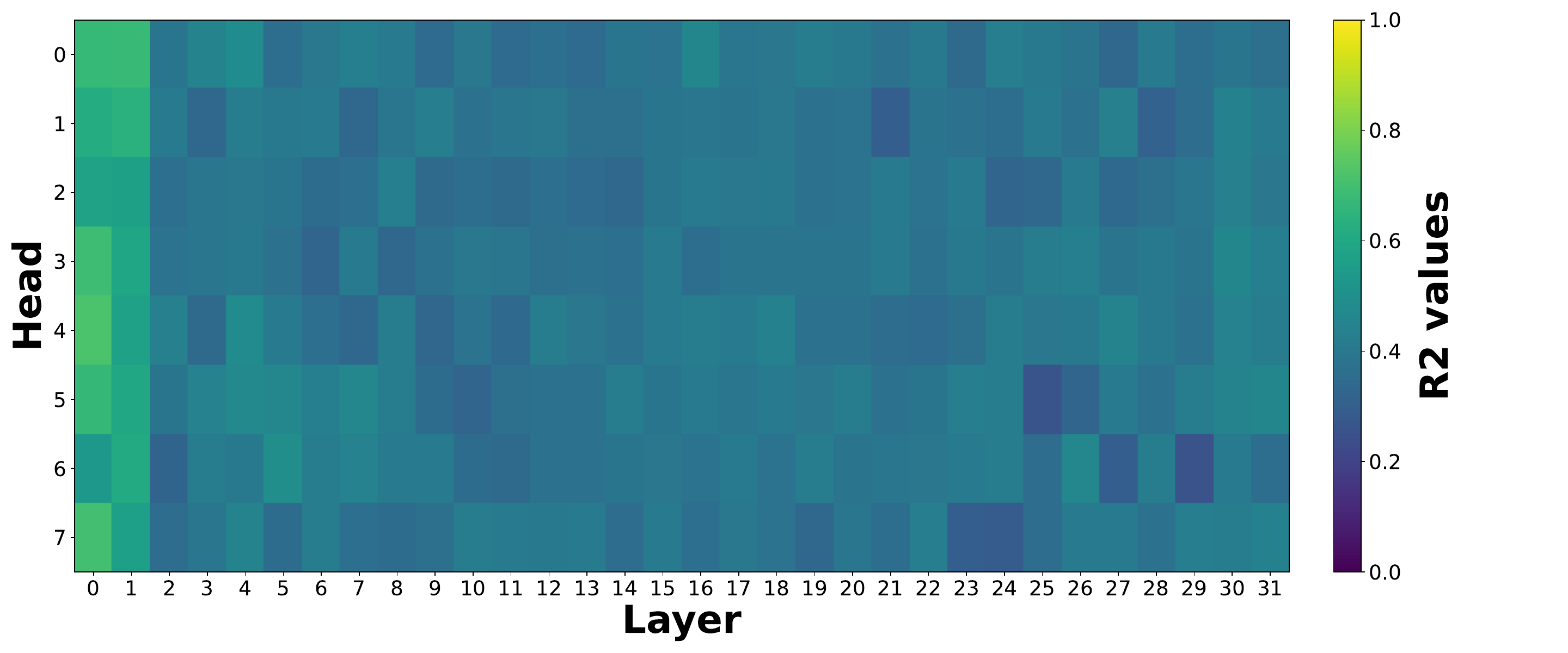}
    \label{fig:heatmap_keys}
  \end{minipage}
  \hfill
  \begin{minipage}[t]{0.44\linewidth}
    \centering
    \includegraphics[width=\linewidth]{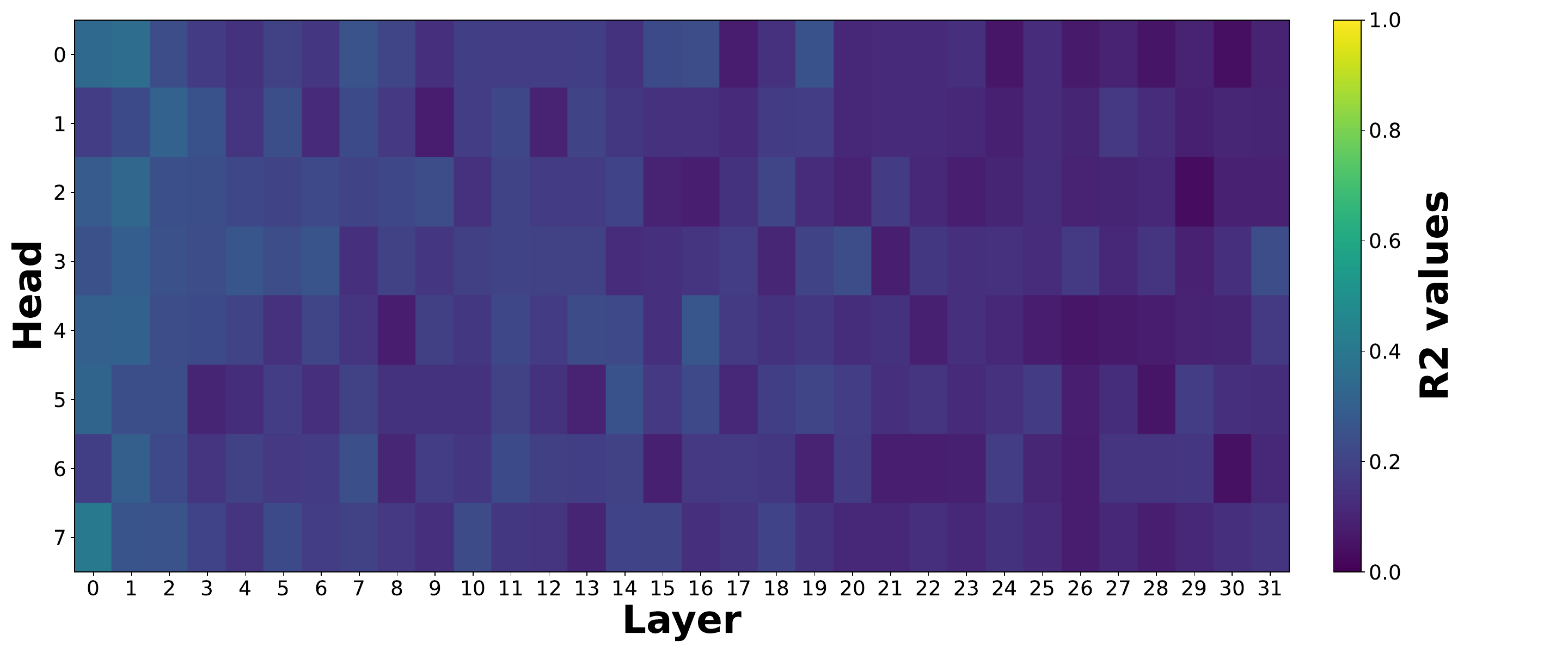}
    \label{fig:heatmap_values}
  \end{minipage}
  \caption{ Heatmap of head‐level linear predictability across transformer layers of Llama3.1 8B on C4. Each row corresponds to one attention head and each column to a model layer; the color intensity encodes the coefficient of determination~$R^2$ when reconstructing that head’s activations from its strongest reference heads. The gradient from warm to cool colors illustrates that linear redundancy is highest in early layers and gradually diminishes in deeper layers for Llama3.1 8B. \textbf{Left} key states, \textbf{right} value states.
}
. \vspace{-1.5em}
  \label{fig:heatmap_linear_predictability}
\end{figure}



%% file: tables/near_far_connectivity.tex
\begin{table}[h]
\caption{Local vs.\ distant dominant linear connections. Boldface and arrows highlight the higher density and strength of near-layer links.}
\label{tab:proximity_density_callout}
\centering
\begin{tabular}{lccr}
\toprule
Window & Proximity      & \% Conn.                     & Mean \(R^2\)                 \\
\midrule
2-layer & Near (\(\le2\)) & \textbf{53.6\%} \(\uparrow\) & \textbf{0.2946} \(\uparrow\) \\
        & Far  (\(\ge3\)) & 46.40\% \(\downarrow\)        & 0.2550 \(\downarrow\)        \\
\midrule
4-layer & Near (\(\le4\)) & \textbf{53.59\%} \(\uparrow\) & \textbf{0.2895} \(\uparrow\) \\
        & Far  (\(\ge5\)) & 46.41\% \(\downarrow\)        & 0.2506 \(\downarrow\)        \\
\bottomrule
\end{tabular}
\end{table}

%% file: appendix/lower_bound.tex
\section{Proof of Theorem~\ref{thm:main-lowerbound}}\label{app:proof_lower}

\begin{theorem}[Hanson-Wright inequality {\cite[Thm. 6.2.2]{Vershynin_2018}}]
\label{thm:hansonwright}

Let $A\in\R^{n\times n}$ be arbitrary matrix and let 
\(
X=(X_1,\dots,X_n)^{\top}\in\br^{n}
\)
have independent, mean-zero, sub-Gaussian coordinates.
Then, for every $t\ge0$,
\[
\Pr\Bigl\{\,
      \bigl|\,X^{\top}AX - \E\bigl[X^{\top}AX\bigr]\bigr|
       \ge t
   \Bigr\}
 \le 
2\exp\Bigl[
   -\,c 
   \min\Bigl\{
        \tfrac{t^{2}}{K^{4}\,\|A\|_{F}^{2}},
         \tfrac{t}{K^{2}\,\|A\|_{2}}
   \Bigr\}
\Bigr],
\]
where 
\(
K=\max_{i}\|X_i\|_{\psi_2}
\)
and \(c>0\) is an absolute universal constant.
\end{theorem}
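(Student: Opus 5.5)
The plan follows the standard route of Rudelson and Vershynin. First normalize: by homogeneity (replace $X$ by $X/K$ and $t$ by $t/K^2$) we may assume $K=1$. Then we split the centered quadratic form into a diagonal and an off-diagonal part,
\[
X^{\top}AX-\E\bigl[X^{\top}AX\bigr]=\sum_{i}A_{ii}\bigl(X_i^2-\E X_i^2\bigr)+\sum_{i\neq j}A_{ij}X_iX_j ,
\]
using that $\E X_iX_j=0$ for $i\neq j$ by independence and zero mean. It suffices to prove the stated tail bound, with $t/2$ in place of $t$, for each part separately, and then combine by a union bound; this only changes the absolute constant $c$.

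\textbf{Diagonal part.} Each $X_i^2-\E X_i^2$ is a mean-zero sub-exponential variable with $\|X_i^2-\E X_i^2\|_{\psi_1}\lesssim\|X_i\|_{\psi_2}^2\le 1$. Bernstein's inequality for weighted sums of independent sub-exponential variables then gives the tail
\[
2\exp\bigl[-c\min\bigl\{t^2/\textstyle\sum_i A_{ii}^2,\;t/\max_i|A_{ii}|\bigr\}\bigr].
\]
We conclude with $\sum_iA_{ii}^2\le\|A\|_F^2$ and $\max_i|A_{ii}|\le\|A\|_2$.

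\textbf{Off-diagonal part.} Write $S=\sum_{i\neq j}A_{ij}X_iX_j$ and bound its moment generating function $\E e^{\lambda S}$ for $|\lambda|\lesssim 1/\|A\|_2$.

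\emph{Decoupling.} Take an independent copy $X'$ of $X$. Using random selectors $\delta_i\in\{0,1\}$ with mean $1/2$ and Jensen's inequality, we get
\[
\E e^{\lambda S}\le \E e^{4\lambda X^{\top}AX'} .
\]

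\emph{Gaussian comparison.} Conditioning on $X'$, the quantity $X^{\top}AX'$ is a sum of independent sub-Gaussians in $X$. Hence
\[
\E_X e^{4\lambda X^{\top}AX'}\le \exp\bigl(C\lambda^2\|AX'\|_2^2\bigr),
\]
which equals, up to constants, $\E_g e^{C\lambda g^{\top}AX'}$ for a standard Gaussian $g$. Repeating this step in $X'$ replaces $X'$ by an independent Gaussian $g'$. This reduces the problem to bounding $\E e^{C\lambda g^{\top}Ag'}$.

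\emph{Gaussian computation.} Use rotation invariance: with the singular values $s_i$ of $A$, we have $g^{\top}Ag'\overset{d}{=}\sum_i s_i g_ig_i'$. The coordinates are independent, and $\E e^{\mu g_ig_i'}=(1-\mu^2)^{-1/2}\le e^{\mu^2}$ for $|\mu|\le 1/2$. This gives
\[
\E e^{\lambda S}\le \exp\bigl(C\lambda^2\|A\|_F^2\bigr)\qquad\text{whenever } |\lambda|\le c/\|A\|_2 ,
\]
since $\sum_i s_i^2=\|A\|_F^2$ and $\max_i s_i=\|A\|_2$.

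\emph{Chernoff step.} Markov's inequality gives
\[
\Pr\{S\ge t\}\le \exp\bigl(-\lambda t+C\lambda^2\|A\|_F^2\bigr).
\]
Optimizing over $\lambda$ in the admissible range $|\lambda|\le c/\|A\|_2$ yields the $\min\{t^2/\|A\|_F^2,\,t/\|A\|_2\}$ exponent. Applying the same argument to $-A$ gives the lower tail.

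The main obstacle is the decoupling-plus-comparison step. One has to verify carefully that the selector argument really controls the MGF of the coupled sum, not merely a tail bound. One also has to track that the two successive conditional sub-Gaussian-to-Gaussian replacements only cost absolute constants in $\lambda$, so that the final range $|\lambda|\lesssim 1/\|A\|_2$ survives. Everything else is standard Bernstein and Chernoff bookkeeping.
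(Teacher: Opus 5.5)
Your proposal is correct and is exactly the standard Rudelson--Vershynin proof from the source the paper cites; the paper gives no proof of its own and simply imports the inequality from Vershynin's book. The route is the same: Bernstein for the diagonal part, then for the off-diagonal part decoupling, conditional replacement of sub-Gaussian vectors by Gaussians, the Gaussian-chaos MGF via the singular value decomposition, and a Chernoff bound. The two points you flag as obstacles are not actual gaps: the selector-plus-Jensen argument bounds $\E F$ for every convex $F$, in particular $F(x)=e^{\lambda x}$ (and the diagonal terms of $X^{\top}AX'$ may be included since they have conditional mean zero), while the comparison steps hold for all $\lambda$ and only rescale it by absolute constants, so the restriction $|\lambda|\lesssim 1/\|A\|_2$ enters only in the Gaussian computation.
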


We will now discuss the proof for Theorem~\ref{thm:main-lowerbound}. 
Let $\mA,\mB\in\R^{m\times k}$ have i.i.d.\ $\cN(0,1)$ entries, and assume $k\le m/2$.  Draw 
\[
\vx \sim \cN(\vzero,\mI_m),
\]
independently of $\mA,\mB$.  We will show that there exist absolute constants $c_1,c_2>0$ such that
\[
\Pr\Bigl\{\inf_{\mC\in\R^{k\times k}}
    \E_{\vx}\bigl\|\vx^{\top}\mA\mC-\vx^{\top}\mB\bigr\|_2^2
   < c_1\,m\,k\Bigr\}
\le 2\exp\bigl(-c_2\,m\,k\bigr).
\]

\medskip
\noindent See that, for any fixed $\mC\in\R^{k\times k}$,
\[
\E_{\vx}\bigl\|\vx^{\top}(\mA\mC-\mB)\bigr\|_2^2
=\tr\bigl((\mA\mC-\mB)^{\top}(\mA\mC-\mB)\E_{\vx}[\vx \vx^{\top}]\bigr) 
\]
\[
=\tr\bigl((\mA\mC-\mB)^{\top}(\mA\mC-\mB)\bigr)
=\|\mA\mC-\mB\|_F^2.
\]
Hence we have
\[
\inf_{\mC}\E_{\vx}\|\vx^{\top}\mA\mC-\vx^{\top}\mB\|_2^2
 = 
\inf_{\mC}\|\mA\mC-\mB\|_F^2.
\]

\medskip

\noindent Our goal now is to derive a lower bound for $\inf_{\mC}\|\mA\mC-\mB\|_F^2$. We will use a standard fact that random Gaussian matrices have full column rank almost surely (with probability $1$). Thus, the matrix $A$ has full column rank and the concatenated matrix $[A,B] \in \br^{m \times 2k}$ has full rank as well, implying that the columns of $A$ and $B$ are linearly independent of each other. 

The function $f(\mC) =\|\mA\mC-\mB\|_F^2$ has a unique minimizer $\mC^{*}= \mA^{\dagger}\mB$ given by the Moore-Penrose pseudoinverse $\mA^\dagger = (\mA^{\top}\mA)^{-1}\mA^{\top}$ which exists since $\mA$ has full column rank. The orthogonal projector onto its column space is given by
\[
\mP_{\mA} = \mA\,\mA^\dagger,
\quad \text{and let }
\mR = \mI_m - \mP_{\mA}.
\]

\noindent By expansion with $\mC=\mA^\dagger\mB$, see that
\[
\inf_{\mC}\|\mA\mC-\mB\|_F^2 
 =  \|\mA(\mA^{\top}\mA)^{-1}\mA^{\top}\mB-\mB\|_F^2
 = \|(\mI-\mP_{\mA})\,\mB\|_F^2
 = \sum_{j=1}^k\bigl\|\mR\,\vb_j\bigr\|_2^2,
\]
where $\vb_j\in\R^m$ is the $j$th column of $\mB$.

\medskip

\noindent Conditional on $\mA$, the matrices $\mP_{\mA}, \mR$ are fixed symmetric matrices of rank $k$ and $m-k$ respectively. Additionally, 
\[
\mP_{\mA}^2 = \mA(\mA^{\top}\mA)^{-1}\mA^{\top}\mA(\mA^{\top}\mA)^{-1}\mA^{\top} = \mA(\mA^{\top}\mA)^{-1}\mA^{\top} = \mP_{\mA}.
\]
This implies that all the eigenvalues of $\mP_{\mA}$ and consequently $\mR$ are either $0$ or $1$. Since $\mR$ has rank $m - k$,
\[
\|\mR\|_2\le1,
\qquad
\tr(\mR)=\|\mR\|_F^2=m-k.
\]

Each column $\vb_j\sim N(\vzero,\mI_m)$ is independent of $\mR$, and
\[
\E\|\mR\,\vb_j\|_2^2=\tr(R)=m-k. 
\]
 By the Hanson-Wright inequality
(Thm~\ref{thm:hansonwright}) , for any $t>0$,
\[
\Pr\Bigl\{\bigl|\|\mR\,\vb_j\|_2^2 - (m-k)\bigr|\ge t
       \Bigm| \mA\Bigr\}
 \le 
2\exp\Bigl(-c\,
  \min\bigl\{\tfrac{t^2}{\|\mR\|_F^2},\,\tfrac{t}{\|\mR\|_2}\bigr\}\Bigr)
\]
\[
\le 
2\exp\Bigl(-c\min\{t^2/(m-k),\,t\}\Bigr).
\]
Setting $t=(m-k)/2$ and using $m-k\ge m/2$, we get
\[
\Pr\bigl\{\|\mR\,\vb_j\|_2^2 < \tfrac12(m-k) \big| \mA\bigr\}
 \le 
2\exp\bigl(-c_3\,m\bigr).
\]
Because the $\vb_j$’s are independent, a union bound over $j=1,\dots,k$ yields
\[
\Pr\Bigl\{\exists\,j:\,\|\mR\,\vb_j\|_2^2<\tfrac12(m-k)\Bigr\}
 \le 
k\cdot2e^{-c_3m}
 \le 
2e^{-c_4\,m},
\]
where in the last inequality we used $k\le m/2$ and $n e^{-cn} = e^{-O(n)}$.

Thus, with probability $\ge 1-2e^{-cm}$, each
$\|\mR\,\vb_j\|_2^2\ge\tfrac12(m-k)$, so
\[
\inf_{\mC}\|\mA\mC-\mB\|_F^2
=\sum_{j=1}^k\|\mR\,\vb_j\|_2^2
 \ge 
k\cdot\tfrac12(m-k)
 \ge 
\frac{1}{2}mk.
\]

%% file: appendix/resources.tex
\section{Resources Used}

\paragraph{Compute hardware.}
All experiments were executed on NVIDIA \textbf{H100} GPUs.
\begin{itemize}
    \item \textbf{GPU model \& memory}\,: H100 (100\,GB HBM3).
    \item \textbf{GPU count per run}\,: 4.
    \item \textbf{Interconnect}\,: NVLink-2 / NVSwitch / PCIe 5.0.
    \item \textbf{CPU host}\,: dual-socket \emph{AMD EPYC 9334} (2 × 32 cores, 64 threads total).
    \item \textbf{System RAM}\,: \textbf{756 GB}.
    \item \textbf{Storage}\,: \textbf{1TB}.
\end{itemize}

\paragraph{Software stack.}
Ubuntu 22.04.5 LTS LTS, Linux kernel \textbf{5.15.0-135-generic}; \textbf{CUDA 12.6} (nvcc 12.6.68, build Wed Aug 14 2024); cuDNN \textbf{9.1.0}; PyTorch \citep{paszke2019pytorchimperativestylehighperformance} \textbf{2.4.0}; Python \textbf{3.12.3}.

\paragraph{Compute budget.}
Each full run consumed \(\approx\) 80 GPU-hours (4 × H100 × \emph{N} hours).  